\title{Generalisation Through Negation and Predicate Invention}
\author {
   David M. Cerna\textsuperscript{\rm 1}, Andrew Cropper\textsuperscript{\rm 2}
}
\newcommand{\name}{\textsc{Nopi}}
\newcommand{\popper}{\textsc{Popper}}
\newcommand{\ale}{\textsc{Aleph}}
\newcommand{\metagolsn}{\textsc{Metagol$_{SN}$}}
\newcommand{\nameb}{\textsc{Nopi$_{bn}$}}
\newcommand{\parOp}{\ensuremath{\bowtie}}
\newcommand{\symc}[1]{\ensuremath{head_s(#1)}}
\newcommand{\symT}[1]{\ensuremath{\mathit{sym}_T(#1)}}
\newcommand{\even}{\ensuremath{\bm{+}}}
\newcommand{\odd}{\ensuremath{\bm{-}}}
\newcommand{\body}[1]{\ensuremath{\mathit{body}(#1)}}
\newcommand{\bodyp}[1]{\ensuremath{\mathit{body}^{\even}(#1)}}
\newcommand{\bodyn}[1]{\ensuremath{\mathit{body}^{\odd}(#1)}}
\newcommand{\bodyv}[2]{\ensuremath{\mathit{body}^{#2}(#1)}}
\newcommand{\dneg}{\ensuremath{\mathbf{not}}}
\newcommand{\psub}{\ensuremath{\preccurlyeq_{\diamond}}}
\newcommand{\sub}{\ensuremath{\preceq_{\theta}}}
\newtheorem{definition}{Definition}
\newtheorem{example}{Example}
\newtheorem{theorem}{Theorem}
\newtheorem{lemma}{lemma}
\newtheorem{proposition}{Proposition}
    \def\addlegendimage{\scriptsize\csname pgfplots@addlegendimage\endcsname}
\begin{document}

\maketitle
\begin{abstract}
The ability to generalise from a small number of examples is a fundamental challenge in machine learning. 
To tackle this challenge, we introduce an inductive logic programming (ILP) approach that combines negation and predicate invention. 
Combining these two features allows an ILP system to generalise better by learning rules with universally quantified body-only variables.
We implement our idea in \name{}, which can learn normal logic programs with predicate invention, including Datalog programs with stratified negation.
Our experimental results on multiple domains show that our approach can improve predictive accuracies and learning times.
\end{abstract}
\section{Introduction}
Zendo is a game where one player, the \emph{teacher}, creates a hidden rule for structures.
The other players, the \emph{students}, aim to discover the rule by building structures.
The teacher provides feedback by marking which structures follow or break the rule without further explanation.
The students continue to guess the rule. 
The first student to correctly guess the rule wins.
For instance, consider the examples shown in Figure~\ref{fig:motivatingExample}.
A possible rule for these examples is \emph{``there are two red cones''}.

\begin{figure}[ht]
\centering
\newcommand\mleft{.19}
\newcommand\mright{.15}
\newcommand\wleft{\fpeval{\mright/\mleft}}
\begin{minipage}{.5\textwidth}
\includegraphics[width=.28\textwidth]{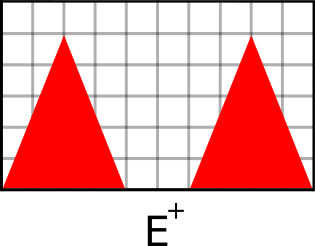}
\hspace{.4em}
\includegraphics[width=.28\textwidth]{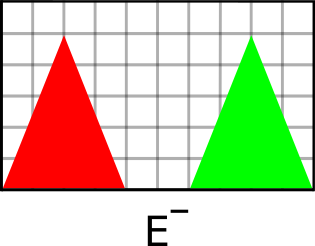}
\hspace{.4em}
\includegraphics[width=.28\textwidth]{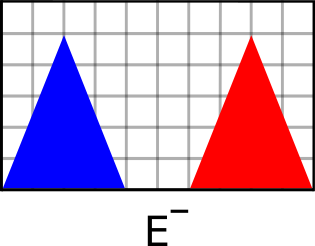}
\end{minipage}
    \caption{
    Positive (E$^+$ ) and negative (E$^-$) Zendo examples.    
    }
    \label{fig:motivatingExample}
\end{figure}

\noindent
Suppose we want to use machine learning to play Zendo, i.e. to learn rules from examples.
Then we need an approach that can (i) learn explainable rules, and (ii) generalise from a small number of examples. 
Although crucial for many problems, these requirements are difficult for standard machine learning techniques \cite{ilp30}.

Inductive logic programming (ILP) \cite{mugg:ilp} is a form of machine learning that can learn explainable rules from a small number of examples. 
For instance, an ILP system could learn the following hypothesis (a set of logical rules) from the examples in Figure \ref{fig:motivatingExample}:
\[
    \begin{array}{l}
    \left\{
    \begin{array}{l}
\emph{f(S) $\leftarrow$ cone(S,A),red(A),cone(S,B),red(B),all\_diff(A,B)}\\
    \end{array}
    \right\}
    \end{array}
\]

\noindent
This hypothesis says that the relation \emph{f} holds for the state $S$ when there are two distinct red cones $A$ and $B$,  i.e. this hypothesis says there are two red cones.

\begin{figure}[ht]
\centering
\includegraphics[width=.18\textwidth]{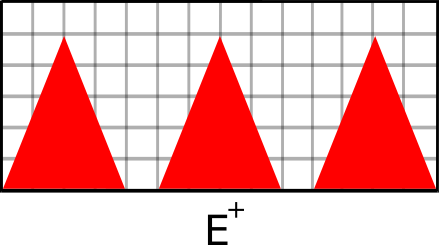}
\hspace{.4em}
\includegraphics[width=.17\textwidth]{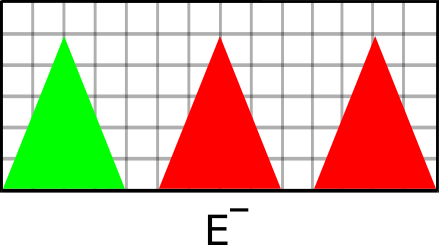}
    \caption{Additional Zendo examples.}
    \label{fig:motivatingExample2}
\end{figure}

\noindent
Suppose we are given the two new examples shown in Figure \ref{fig:motivatingExample2}.
Our previous hypothesis does not correctly explain the new examples as it entails the new negative example.
To correctly explain all the examples, we need a disjunctive hypothesis that says \emph{``[there are exactly two red cones] or [there are exactly three red cones]''}.
Given a new positive example with four red cones and a negative example with three red and one green cone, we would need to learn yet another rule that says \emph{``there are exactly four red cones''}.

As is hopefully clear, we would struggle to generalise beyond the training examples using this approach because we need to learn a rule for each number of cones.
Rather than learn a rule for each number of cones, we would ideally learn a single rule that says \emph{``all the cones are red''}.
However, most ILP approaches struggle to learn rules of this form because they only learn Datalog or definite programs and thus only learn rules with \emph{existentially} quantified body-only variables ~\cite{AptB91,datsin:datalog}.

To overcome this limitation, we combine \emph{negation as failure} (NAF) \cite{naf} and \emph{predicate invention} (PI) \cite{stahl:pi} to learn rules with \emph{universally} quantified body-only variables.
The main reason to combine NAF and PI is that many concepts can only be expressed in this more expressive language ~\cite{stahl:pi,datsin:datalog}.
For instance, for the Zendo scenario, our approach,
which combines negation and PI, learns the hypothesis:
\[
    \begin{array}{l}
    \left\{
    \begin{array}{l}
    \emph{f(S) $\leftarrow$ scene(S), not inv$_1$(S)}\\
    \emph{inv$_1$(S) $\leftarrow$ cone(S,P), not red(P)}\\
    \end{array}
    \right\}
    \end{array}
\]

\noindent
This hypothesis says \emph{``all the cones are red''}.
The predicate symbol \emph{inv$_1$} is not provided as input and is invented by our approach. 
The rule defined by \emph{inv$_1$} says \emph{``there is a cone that is not red''}.
The rule defined by \emph{f} negates this rule and says \emph{``it is not true that there is a cone that is not red''}. 
The hypothesis, therefore, states  \emph{``there does not exist a cone that is not red''} which by the equivalences of first-order logic ($\forall \equiv \textbf{not}\ \exists\ \textbf{not}$) is the same as \emph{``all the cones are red''}.



To combine negation and PI, we build on \emph{learning from failures} (LFF) \cite{popper}.
An LFF learner continually generates and tests hypotheses, from which it infers constraints. 
For instance, if a hypothesis is too general, i.e. entails a negative example, an LFF learner, such as \popper{}, builds a generalisation constraint to prune more general hypotheses from the hypothesis space.
We extend LFF from learning definite (monotonic) programs to learning \textit{polar} programs, a fragment of normal (non-monotonic) programs.
The key benefit of polar programs is that we can efficiently reason about the subsumption relation between them in our learning algorithm.
Furthermore, we show (Theorem \ref{thm:stratified}) that polar programs capture Datalog with stratified negation \cite{datsin:datalog}.
We implement our idea in \name{}, which, as it builds on \popper{}, supports learning recursive and optimal programs.

\paragraph{Novelty, impact, and contributions.}
The key novelty of our approach is \emph{the ability to learn normal logic programs with invented predicate symbols}.
We expand on this novelty in Section \ref{sec:related}.
The impact is that our approach can learn programs that existing approaches cannot.
Specifically, we claim that combining negation and PI can improve learning performance by allowing us to learn rules with universally quantified body-only variables.
Our experiments on multiple domains support our claim and show that our approach leads to vastly improved predictive accuracies and learning times.

Overall, we make the following contributions:
\begin{enumerate}
\item We introduce \textit{polar} programs, a fragment of \textit{stratified} logic programs.
We show (Theorem \ref{thm:stratified}) that this fragment of normal logic programs can capture Datalog with stratified negation \cite{datsin:datalog}.

\item We introduce LFF constraints for this non-monotonic setting and prove their soundness (Propositions \ref{prop:gen_sound} and \ref{prop:spec_sound}).

\item We introduce \name{}, an ILP system that can learn normal logic programs with PI and recursion, such as Datalog programs with stratified negation.

\item We empirically show on multiple domains that (i) \name{} can outperform existing approaches and (ii) our non-monotonic constraints can reduce learning times.
\end{enumerate}

\section{Related Work}
\label{sec:related}

\textbf{Program synthesis.}
ILP is a form of program synthesis \cite{mis}, which attracts a broad community of researchers \cite{dilp,ellis:scc,tom_pi}.
Many recent approaches synthesise monotonic Datalog programs \cite{difflog,prosynth,shitruleselection}. 
We differ in many ways, including by learning non-monotonic programs.

\textbf{Negation.}
Many ILP approaches learn non-monotonic programs \cite{foil,srin:noise,DBLP:conf/ecml/DimopoulosK95,DBLP:conf/lpnmr/Sakama01,DBLP:journals/ml/SakamaI09,xhail}.
Most use negation to handle exceptions such as \emph{``birds fly except penguins''} and thus require negative examples.
For instance, \citet{DBLP:conf/ijcai/InoueK97} learn normal logic programs by first learning a program that covers the positive examples and then adding exceptions (using NAF) to account for the negative examples. 
By contrast, we combine NAF and PI to improve generalisation and do not need negative examples --
as \citet{BekkerD20} state, it is sometimes necessary to learn from  positive examples alone. 
Moreover, most approaches build on inverse entailment \cite{progol}, so they struggle to learn recursive and optimal programs.
By contrast, our approach can learn recursive and optimal programs because we build on LFF.
As \citet{FogelZ98} state, learning non-monotonic programs is difficult because the standard subsumption relation does not hold in general for normal programs.
To overcome this challenge, the authors introduce a subsumption relation for normal programs based on the dependency graph of predicate symbols in a program. 
We differ because we introduce a general fragment of normal logic programs related to stratified logic programs.
Moreover, our approach supports PI.

\textbf{Predicate invention.}
Although crucial for many tasks, such as planning \cite{tom_pi} and learning complex algorithms \cite{metaopt}, most ILP systems do not support PI \cite{progol,aleph,tilde,aspal,quickfoil,inoue:lfit}.
Approaches that support PI usually need metarules to restrict the syntax of hypotheses \cite{mugg:metagold,dilp,hexmil,celine:bottom,dai2020abductive,DBLP:conf/icml/GlanoisJFWZ0LH22}, which, in some cases, are impossible to provide \cite{reduce}.
By contrast, we do not need metarules.

\textbf{Negation and predicate invention.}
\citet{DBLP:journals/jiis/Ferilli16a} describe an approach that specialises a theory to account for a misclassified negative example.
If a negative example is misclassified, they introduce a conjunction of negated preconditions, where each precondition is an invented predicate.
Their approach only works in a Datalog setting, cannot learn recursive programs, and only works when a negative example is misclassified.
We differ because we (i) do not need negative examples, (ii) learn recursive programs, (iii) learn normal logic programs, and (iv) learn optimal programs. 
\citet{stepwise2018} learn recursive programs with negation and PI.
Their approach first learns a program for the positive examples and allows some negative examples to be covered.
It then flips the examples (false positives from the previous iteration are positive examples, and the previous true positives are now negative examples) and tries to learn again.
We differ because we do not need negative examples or metarules.
ILASP~\cite{ilasp} can learn non-monotonic programs with invented predicate symbols if a user tells it which symbols to invent.
By contrast, \name{} does not need this information. 
Moreover, ILASP precomputes every possible rule in the hypothesis 
space, which is often infeasible. For instance, our Zendo4 experiment has approximately $10^{10}$ rules in the hypothesis space.

\section{Problem Setting}
\label{sec:setting}
We assume familiarity with logic programming \cite{lloyd:book} and ASP \cite{asp} but have included summaries in Appendix~\ref{sec:terminology}. 
For clarity, we define some key terms.
A \textit{normal} rule is of the form
$
    h\leftarrow b_1,\cdots, b_n, \dneg\ b_{n+1}, \cdots, \dneg \ b_{n+m}
$
where $h$ is the \emph{head} atom, each $b_i$ is a literal, and $b_1,\cdots, b_n, \dneg\ b_{n+1}, \cdots, \dneg \ b_{n+m}$ is the \emph{body}.
The symbol $\dneg$ denotes \textit{negation as failure}~\cite{naf}. 
A literal is an atom (a non-negated literal) or the negation of an atom (a negated literal).
A \emph{normal} logic program is a set of normal rules.
A clause is a set of literals.
A \emph{definite} clause is a clause with exactly one non-negated literal. 
A \emph{substitution} $\theta = \{v_1 / t_1, ..., v_n/t_n \}$ is the simultaneous replacement of each variable $v_i$ by its corresponding term $t_i$. 
A clause $C_1$ \emph{subsumes} a clause $C_2$ if and only if there exists a substitution $\theta$ such that $C_1 \theta \subseteq C_2$ \cite{plotkin:thesis}.
A definite theory $P$ subsumes a definite theory $Q$ ($P \sub Q$) if and only if $\forall r_2 \in Q, \exists r_1 \in P$ such that $r_1$ subsumes $r_2$.
A definite theory $P$ is a \emph{specialisation} of a definite theory $Q$ if and only if $Q \sub P$.
A definite theory $P$ is a \emph{generalisation} of a definite theory $Q$ if and only if $P \sub Q$.

\subsection{Polar Programs}
To learn normal programs, we need to go beyond definite programs and standard subsumption.
To do so, we introduce \emph{polar programs}, which are normal programs where predicate symbols have polarities.
We first define \emph{top symbols}, which are head predicate symbols that are only used positively in a program:

\begin{definition}[\textbf{Top symbols}]
\label{def:topset}
  Let $P$ be a normal program. Then $top(P)$ is the inclusion-maximal subset of the head predicate symbols occurring in $P$ satisfying the follow two conditions both hold:
  \begin{itemize}
      \item if $p\in top(P)$, then $p$ does not occur in a negated literal in $P$
      \item if $p\in top(P)$ and $p$ is in the body of a rule $r$ then the head predicate symbol of $r$ is in $top(P)$
  \end{itemize}
\end{definition}
\noindent
We define \emph{defs(P)} as the set of all head predicate symbols in $P$ that are not in \emph{top(P)}.



\begin{example}
\label{ex:top}
To illustrate top symbols, consider the program: 
\[  
 \begin{array}{l}
     P =
    \left\{
    \begin{array}{l}
    p\leftarrow g, q \\
    p\leftarrow w, \mathbf{not}~s \\
    g\leftarrow q, p \\
    s\leftarrow l
    \end{array}
    \right\}
    \end{array}
\]
\noindent
In this program, $top(P)= \{ p,g\}$ and $defs(P)=\{s\}$. 
\end{example}

\noindent 
In a normal program $P$, the polarity of every head predicate symbol $p$ is \emph{positive} $(pos(p))$ or \emph{negative} $(neg(p))$.
The polarity of a symbol in \emph{top(P)} is positive. By contrast, the polarity of a symbol in \emph{defs(P)} depends on whether the symbol is used positively or negatively:

\begin{definition}[\textbf{Polarity}]
\label{def:polarity}
Let $P$ be a normal program, $r$ be a rule in $P$, 
$p$ be the head predicate symbol of $r$, 
$\bodyp{r}$ and $\bodyn{r}$ be the predicate symbols that appear in non-negated and negated body literals in $r$ respectively, and $q$ in $defs(P)$ be a predicate symbol in the body of $r$.
Then the polarity of $q$ is as follows:
\begin{itemize}
\item[(1)] if $q\in \bodyp{r}$ and $pos(p)$ then $pos(q)$
\item[(2)]  if $q\in \bodyn{r}$ and $pos(p)$ then $neg(q)$
\item[(3)]  if $q\in \bodyp{r}$ and $neg(p)$ then $neg(q)$ 
\item[(4)]  if $q\in \bodyn{r}$ and $neg(p)$ then $pos(q)$
\end{itemize} 
\end{definition}
\noindent

\begin{example}
Consider the program $P$ from Example \ref{ex:top}.
The polarities of the head predicate symbols are $pos(p)$, $pos(g)$, and $neg(s)$.
\end{example}

\noindent
We define a \emph{polar program}:
\begin{definition}[\textbf{Polar program}]
\label{def:polarProg}
A normal program $P$ is polar if and only if the polarity of every head predicate symbol in $P$ is exclusively positive or negative.
\end{definition}

\begin{example}
The following program is not polar because the polarity of \emph{odd} is neither positive nor negative:

\[
  \begin{array}{l}
    \left\{
        \emph{ ${odd}$(X) $\leftarrow$  succ(Y,X), \textbf{not}\ ${odd}$(Y)}
    \right\}
    \end{array}
\]

\noindent
\end{example}
\begin{example}
The following \textit{stratified} program is not polar because the polarity of $inv_1$ is positive and negative:
\[
 \begin{array}{l}
    \left\{
    \begin{array}{lll}
    \emph{f $\leftarrow$ inv$_1$, \textbf{not}\ inv$_1$}\\    \emph{inv$_1$ $\leftarrow$ inv$_2$} \\
    \emph{inv$_2$ $\leftarrow$  w}
    \end{array}
    \right\}
    \end{array}
\]
\end{example}
\begin{example}
\label{ex:rec}
The following program is \textit{polar} because only $pos(unconnected)$ and $neg(inv_1)$ hold:
\[
 \begin{array}{l}
    \left\{
    \begin{array}{l}
    \emph{$r_1:$ ${unconnected}(A,B)$ $\leftarrow$ \textbf{not}\ $inv_1$(A,B)}\\
    \emph{$r_2:$ ${inv}_1$(A,B) $\leftarrow$ ${edge}$(A,B)}\\
    \emph{$r_3:$ ${inv}_1$(A,B) $\leftarrow$ ${edge}$(A,C), ${inv}_1$(C,B)}
    \end{array}
    \right\}
    \end{array}
\]
\end{example}
\noindent
The rules of a polar program $P$ are \textit{positive} ($P^{\even}$) or \textit{negative} ($P^{\odd}$) depending on the polarity of their head symbols.

\begin{example}
    Consider the program P from Example~\ref{ex:rec}. 
Then $P^{\even}=\{r_1\}$ and $P^{\odd}=\{r_2,r_3\}$.
\end{example}
\noindent
We can compare positive rules using standard subsumption. 
For negative rules, we need to flip the order of comparison.
To do so, we introduce \emph{polar subsumption}:
\begin{definition}[\textbf{Polar subsumption}]
\label{def:polarsub}
Let $P$ and $Q$ be polar programs.
Then $P$ \textit{polar subsumes} $Q$ ($P\psub Q$)  
iff $P^{\even}\preceq_{\theta} Q^{\even}$ and $Q^{\odd}\preceq_{\theta}P^{\odd}$.
\end{definition}
\noindent 
\noindent
\begin{example}
To understand the intuition behind Definition~\ref{def:polarsub}, consider the following 
polar programs:        
\begin{minipage}{.24\textwidth}
\[
 P = \hspace{-.8em}\begin{array}{l}
    \left\{
    \begin{array}{l}
    \emph{$r_1$:\ f $\leftarrow$ \textbf{not}  $inv_1$ }\\
    \emph{$r_2$:\  $inv_1$  $\leftarrow$ a, b}
    \end{array}
    \right\}
    \end{array}
\]
\end{minipage}
\begin{minipage}{.17\textwidth}
\[
 Q =  \hspace{-.8em} \begin{array}{l}
    \left\{
    \begin{array}{l}
    \emph{$r_3$:\ f $\leftarrow$ \textbf{not}  $inv_1$ }\\
    \emph{$r_4$:\ $inv_1$  $\leftarrow$ a}
    \end{array}
    \right\}
    \end{array}
\]
\end{minipage}

\noindent
Note that $P\psub Q$ because $r_1 \sub r_3$ where $r_1\in P^{\even}$ and $r_3\in Q^{\even}$, and  $r_4\sub r_2$ where $r_2\in P^{\odd}$ and $r_4\in Q^{\odd}$ . 

\noindent
\begin{minipage}{.24\textwidth}
\[
 P = \hspace{-.8em}   \begin{array}{l}
    \left\{
    \begin{array}{l}
    \emph{$r_1$:\ f $\leftarrow$ \textbf{not}  $inv_1$ }\\
    \emph{$r_2$:\  $inv_1$  $\leftarrow$ b}\\
    \emph{$r_3$:\  $inv_1$  $\leftarrow$ a}
    \end{array}
    \right\}
    \end{array}
\]
\end{minipage}
\begin{minipage}{.17\textwidth}
\[
 Q =  \hspace{-.8em}  \begin{array}{l}
    \left\{
    \begin{array}{l}
    \emph{$r_4$:\ f $\leftarrow$ \textbf{not}  $inv_1$ }\\
    \emph{$r_5$:\  $inv_1$  $\leftarrow$ b}\\
    \end{array}
    \right\}
    \end{array}
\]
\end{minipage}

\noindent Note that $Q\psub P$ because $r_4 \sub r_1$ where $r_1\in P^{\even}$ and $r_4\in Q^{\even}$, and $r_2\sub r_5$ where $r_2\in P^{\odd}$ and $r_5\in Q^{\odd}$. 

\end{example}

\noindent
We show that polar subsumption implies entailment. Note, the properties of negation we require to prove the following two theorems hold in the commonly used semantics for NAF (such as stable and well-founded) when unstratified usage of negation does not occur. Both stratified and polar logic programs do not allow for unstratified usage of negation.

\begin{theorem}[\textbf{Entailment property}]
\label{thm:EntPro}
Let $P$ and $Q$ be polar programs such that  $P\psub Q$. 
Then $P \models Q$.
\end{theorem}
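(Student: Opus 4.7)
The plan is to derive $P \models Q$ by comparing the unique stable models $M_P$ and $M_Q$ of the two polar programs, exploiting the split into $P^\even$ and $P^\odd$ together with the paper's observation that polar programs do not use negation unstratifiedly. Because polar programs are stratified, both $P$ and $Q$ admit unique stable models that coincide with their well-founded (or perfect) models, so semantic entailment $P \models a$ on a ground atom $a$ reduces to $a \in M_P$. Hence it suffices to show $M_Q \subseteq M_P$ on the positive-polarity atoms that witness entailment.

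The core of the argument is a joint induction on the stratum of derivation, proving simultaneously:
(i) every positive-polarity atom in $M_Q$ also lies in $M_P$; and
(ii) every negative-polarity atom in $M_P$ also lies in $M_Q$.
Claim (i) is the statement of the theorem on the relevant atoms, while (ii) is the auxiliary dual whose contrapositive is needed to transport the absence of negated body literals between the two models.

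For the inductive step of (i), fix a positive-polarity atom $a \in M_Q$ fired by some ground instance $r_2\sigma$ of a rule $r_2 \in Q^\even$. By Definition~\ref{def:polarity}, the non-negated body atoms of $r_2\sigma$ are positive-polarity and lie in $M_Q$ at strictly lower strata, while the negated body atoms are negative-polarity and are absent from $M_Q$. Using $P^\even \sub Q^\even$, choose $r_1 \in P^\even$ and $\theta$ with $r_1\theta \subseteq r_2$ as clauses, so that the heads agree and the body of $r_1\theta\sigma$ is contained in that of $r_2\sigma$. The induction hypothesis (i) places the non-negated body atoms of $r_1\theta\sigma$ inside $M_P$, and the contrapositive of (ii) keeps the negated body atoms outside $M_P$; hence $r_1\theta\sigma$ fires in $M_P$ and $a \in M_P$. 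The step for (ii) is symmetric, starting from $a \in M_P$ with $a$ of negative polarity, using $Q^\odd \sub P^\odd$, and invoking (ii) together with the contrapositive of (i).

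The main obstacle is the intertwining of (i) and (ii): Definition~\ref{def:polarity} forces polarities to alternate along negated body literals, so neither inclusion can be established in isolation and both must be advanced together at every stratum. The polar restriction is precisely what makes this joint induction well-founded; without it, a symbol could acquire both polarities through unstratified negation and the induction would collapse. A small base-case argument covers the background atoms at the bottom of the stratification, which are shared by $P$ and $Q$ and hence contribute identically to both models.
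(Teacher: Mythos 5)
Your proof is correct, but it takes a genuinely different route from the paper's. The paper argues at the level of individual rules and classical logic: for $r_2\in Q^{\even}$, standard subsumption by some $r_1\in P^{\even}$ immediately gives $r_1\models r_2$; for the negative part, $Q^{\odd}\sub P^{\odd}$ gives $r_2\models r_1$ rule-by-rule, and contraposition yields $\dneg\, r_1\models \dneg\, r_2$. That argument is two lines but leaves implicit how these rule-level entailments compose into program-level entailment under a nonmonotonic semantics. You instead work semantically with the unique stable (perfect) models $M_P$ and $M_Q$ guaranteed by stratification, and run a mutual induction over strata on the two inclusions --- positive-polarity atoms of $M_Q$ lie in $M_P$, negative-polarity atoms of $M_P$ lie in $M_Q$ --- with each inclusion supplying, via its contrapositive, exactly the "negated body literal is absent" condition the other needs one stratum up. This buys a proof that makes precise what $P\models Q$ means operationally (which is what Propositions \ref{prop:gen_sound} and \ref{prop:spec_sound} actually consume), and it isolates exactly where the polar restriction is used: exclusive polarity is what keeps the joint induction well-founded. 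The costs are that you must assume (as the paper implicitly does) that $P$ and $Q$ induce compatible polarity assignments and stratifications on shared symbols, and that subsumption of normal rules is read with negated literals matched only to negated literals; both assumptions hold in the intended hypothesis-space setting, but are worth stating.
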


\begin{proof} 
For any $r_2$  in $Q^{\even}$ the implication trivially follows from properties of $\leq_{\theta}$. Now let $r_1$ in $P^{\odd}$ and $r_2$ in $Q^{\odd}$  such that $r_2\leq_{\theta} r_1$; this implies $r_2\models r_1$. By contraposition, we derive   $\mathbf{not}\ r_1\models\mathbf{not}\ r_2$. 

\end{proof}

\noindent
We also show that polar programs can express all concepts expressible as stratified programs:



\begin{theorem}
\label{thm:stratified}
Let $S$ be a stratified logic program. Then there exists a polar program $P$ such that for all $p\in top(S)$, $S\models \forall \vec{x}.p(\vec{x})$ iff $P\models \forall \vec{x}.p(\vec{x})$ (See Appendix~\ref{app:stratpol}).
\end{theorem}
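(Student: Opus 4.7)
The plan is to give a syntactic translation $S \mapsto P$ that duplicates every defined predicate as a positive and a negative copy, then to verify that $P$ is polar and that its perfect model agrees with $S$'s perfect model on the top symbols. Concretely, for each $q \in defs(S)$ I introduce fresh symbols $q^{\even}$ and $q^{\odd}$ of the same arity; symbols in $top(S)$ and the extensional background predicates are not renamed. Each rule $q(\vec{x}) \leftarrow B^{\even}, \mathbf{not}\ B^{\odd}$ in $S$ becomes two rules: in the $q^{\even}$-copy every $s \in defs(S)$ inside $B^{\even}$ is renamed to $s^{\even}$ and every $s$ inside $B^{\odd}$ to $s^{\odd}$; in the $q^{\odd}$-copy the two renamings are swapped. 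Each rule with head $p \in top(S)$ is rewritten exactly once, with positive body occurrences of $s \in defs(S)$ renamed to $s^{\even}$ and negated ones to $s^{\odd}$.

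The second step is to verify that $P$ is polar. Because each $p \in top(S)$ occurs in $S$ only in bodies of $top(S)$-headed rules and never under negation, the same is true in $P$, so $top(S) \subseteq top(P)$ and those symbols carry positive polarity. For each $q^{\even}$, occurrences under positive-polarity heads sit in positive body positions while occurrences under negative-polarity heads sit in negated body positions; by Definition~\ref{def:polarity} cases (1) and (4), both assignments yield positive polarity, so $q^{\even}$ is unambiguously positive. The dual argument shows $q^{\odd}$ is unambiguously negative. Hence $P$ satisfies Definition~\ref{def:polarProg}.

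The final and main step is the semantic agreement. I would equip $P$ with the stratification inherited from $S$ by assigning both $q^{\even}$ and $q^{\odd}$ the stratum of $q$; because the renaming preserves the sign of each body literal relative to its head, the stratification inequalities continue to hold, so $P$ is stratified and admits a unique perfect model. The key claim, proved by strong induction on the stratum level, is that in the perfect model of $P$ the predicates $q^{\even}$ and $q^{\odd}$ have exactly the same extension as $q$ has in the perfect model of $S$, for every $q \in defs(S)$. The inductive step is immediate because within a stratum the two copies of each rule reproduce the same positive fixed-point operator on the translated literals, and the inductive hypothesis handles the strictly lower-stratum negated literals. The claim for each $p \in top(S)$ then follows by one more application of the same argument to the single translated rule, yielding the iff statement. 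The chief obstacle I anticipate is bookkeeping: arguing simultaneously and without circularity about both copies at the same stratum, and making the argument robust against the choice of NAF semantics. This is tractable because the translation preserves stratification and the stable and well-founded semantics coincide on stratified programs, so the argument reduces to a layer-by-layer comparison of fixed points.
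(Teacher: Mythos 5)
Your proof is correct, but it takes a genuinely different route from the paper's. The paper proceeds by an iterative, demand-driven rewriting: a \emph{flattening} step splits off a fresh copy of a predicate whenever it occurs both positively and negatively in rule bodies, and a subsequent \emph{stretching} step splits a predicate whenever it still admits two distinct ``trace'' (parity-of-negations) values; termination lemmas show the process reaches a polar program, and the semantic claim is justified briefly by observing that each step merely duplicates existing definitions under a fresh name. You instead give a one-shot global translation that unconditionally duplicates every $q\in defs(S)$ into a positive copy $q^{\even}$ and a negative copy $q^{\odd}$, with the body renamings chosen so that the parity invariant holds by construction; your polarity check via cases (1) and (4) (resp.\ (2) and (3)) of Definition~\ref{def:polarity} is exactly the right verification, and your observation that the two same-stratum copies never refer to each other is what breaks the apparent circularity. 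What your approach buys is a much more explicit semantic argument -- the stratum-by-stratum comparison of least fixed points is spelled out where the paper's justification is a one-sentence appeal to ``repetition modulo renaming'' -- at the cost of always producing two copies of every defined predicate, whereas the paper's procedure only splits symbols for which a genuine polarity conflict arises. One small point to tighten if you write this up fully: when you assert $top(S)\subseteq top(P)$, note that condition (b) of Definition~\ref{def:topset} guarantees top symbols never occur in bodies of $defs$-headed rules, so they are untouched by the renaming; and the copies $q^{\even}$ may themselves land in $defs(P)$ rather than $top(P)$ (since they can occur under $\dneg$ in the $q^{\odd}$-copies), which is harmless because your case analysis still forces them to be uniformly positive.
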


\noindent
Thus, we do not lose expressivity by learning polar programs rather than stratified programs.

\subsection{Learning From Failures (LFF) }
LFF searches a hypothesis space (a set of hypotheses) for a hypothesis that generalises examples and background knowledge.
In the existing literature, an LFF hypothesis is a definite (monotonic) program.
LFF uses \emph{hypothesis constraints} to restrict the hypothesis space.
Let $\mathcal{L}$ be a language that defines hypotheses.
A \emph{hypothesis constraint} is a constraint expressed in $\mathcal{L}$.
Let $C$ be a set of hypothesis constraints written in a language $\mathcal{L}$.
A hypothesis $H$ is \emph{consistent} with $C$ if, when written in $\mathcal{L}$, $H$ does not violate any constraint in $C$.
We denote as $\mathcal{H}_{C}$ the subset of the hypothesis space $\mathcal{H}$ which does not violate any constraint in $C$.

\subsection{LFFN}
We extend LFF to learn polar programs, which we call the \emph{learning from failures with negation (LFFN)} setting.
We define the LFFN input:
\begin{definition}[\textbf{LFFN input}]
\label{def:probin}
A \emph{LFFN} input is a tuple $(E^+, E^-, B, \mathcal{H}, C)$ where $E^+$ and $E^-$ are sets of ground atoms denoting positive and negative examples respectively; $B$ is a normal logic program denoting background knowledge;
$\mathcal{H}$ is a hypothesis space of polar programs, and $C$ is a set of hypothesis constraints.
\end{definition}

\noindent
To be clear, an LFFN hypothesis is a polar (non-monotonic) program and the hypothesis space is a set of polar programs.
We define an LFFN solution:

\begin{definition}[\textbf{LFFN solution}]
\label{def:solution}
Given an input tuple $(E^+, E^-, B, \mathcal{H}, C)$, a hypothesis $H \in \mathcal{H}_{C}$ is a \emph{solution} when $H$ is \emph{complete} ($\forall e \in E^+, \; B \cup H \models e$) and \emph{consistent} ($\forall e \in E^-, \; B \cup H \not\models e$).
\end{definition}

\noindent
If a hypothesis is not a solution then it is a \emph{failure}.
A hypothesis is \emph{incomplete} when $\exists e \in E^+, \; H \cup B \not \models e$; 
\emph{inconsistent} when $\exists e \in E^-, \; H \cup B \models e$;
 \emph{partially complete} when $\exists e \in E^+, \; H \cup B \models e$; 
 and \emph{totally incomplete} when $\forall e \in E^+, \; H \cup B \not \models e$.
 
Let $cost : \mathcal{H} \mapsto \mathbb{N}$ be a function that measures the cost of a hypothesis. 
We define an \emph{optimal} solution:

\begin{definition}[\textbf{Optimal solution}]
\label{def:opthyp}
Given an input tuple $(E^+, E^-, B, \mathcal{H}, C)$, a hypothesis $H \in \mathcal{H}_{C}$ is \emph{optimal} when (i) $H$ is a solution and (ii) $\forall H' \in \mathcal{H}_{C}$, where $H'$ is a solution, $cost(H) \leq cost(H')$.
\end{definition}

\noindent
 Our cost function is the number of literals in the hypothesis. 


\subsection{LFFN Constraints }
\label{subsec:LFFN}
An LFF learner learns hypothesis constraints from failed hypotheses.
\citet{popper} introduce hypothesis constraints based on subsumption.
A \emph{specialisation} constraint prunes specialisations of a hypothesis.
A \emph{generalisation} constraint prunes generalisations.
\label{subsec:NegGenSpe}
The existing LFF constraints are only sound for monotonic programs, i.e. they can incorrectly prune optimal solutions when learning non-monotonic programs, and are thus unsound for the LFFN setting.
The reason for unsoundness is that entailment is not a consequence of subsumption in a non-monotonic setting, even in the propositional case, as the following examples illustrate.



\begin{example}
\label{ex:genspe}
Consider the programs:

\begin{minipage}{.22\textwidth}
\[
 P =   \begin{array}{l}
    \left\{
    \begin{array}{l}
    \emph{a.}\\
    \emph{f $\leftarrow$ \textbf{not}  $inv_1$ }\\
    \emph{ $inv_1$  $\leftarrow$ b}\\
    \emph{ $inv_1$  $\leftarrow$ a}
    \end{array}
    \right\}
    \end{array}
\]
\end{minipage}
\begin{minipage}{.17\textwidth}
\[
 Q =   \begin{array}{l}
    \left\{
    \begin{array}{l}
    \emph{a.}\\
    \emph{f $\leftarrow$ \textbf{not}  $inv_1$ }\\
    \emph{ $inv_1$  $\leftarrow$ b}\\
    \end{array}
    \right\}
    \end{array}
\]
\end{minipage}
\vspace{.2em}

\noindent
Note that $P\sub Q$ and $Q \models f$ but $P\not \models f$.
Similarly, we have the following:
\vspace{-.4em}

\begin{minipage}{.22\textwidth}
\[
 P' =   \begin{array}{l}
    \left\{
    \begin{array}{l}
    \emph{a.}\\
    \emph{f $\leftarrow$ \textbf{not}  $inv_1$ }\\
    \emph{ $inv_1$  $\leftarrow$ a, b}
    \end{array}
    \right\}
    \end{array}
\]
\end{minipage}
\begin{minipage}{.17\textwidth}
\[
 Q' =   \begin{array}{l}
    \left\{
    \begin{array}{l}
    \emph{a.}\\
    \emph{f $\leftarrow$ \textbf{not}  $inv_1$ }\\
    \emph{ $inv_1$  $\leftarrow$ a}
    \end{array}
    \right\}
    \end{array}
\]
\end{minipage}
\end{example}
\noindent
Note that $Q'\sub P'$ and $P' \models f$ but $Q'\not \models f$.

To overcome this limitation,  we introduce constraints that are optimally sound for polar programs (Definition~\ref{def:polarity}) based on polar subsumption (Definition~\ref{def:polarsub}). 
In the LFFN setting, Theorem~\ref{thm:EntPro} implies the following propositions: 
\begin{proposition}[\textbf{Generalisation soundness}]
\label{prop:gen_sound}
Let $(E^+, E^-,$ $ B, \mathcal{H}, C)$ be a LFFN input, $H_1, H_2 \in \mathcal{H}_{C}$, $H_1$ be inconsistent, and $H_2\psub H_1$.
Then $H_2$ is not a  solution.
\end{proposition}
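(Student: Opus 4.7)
The plan is to chain Theorem~\ref{thm:EntPro} with the failure witness provided by the inconsistency of $H_1$. Since $H_1$ is inconsistent, by Definition~\ref{def:solution} there exists some $e \in E^-$ such that $B \cup H_1 \models e$. I would fix this witness $e$ and aim to show that the same $e$ witnesses inconsistency of $H_2$, which immediately prevents $H_2$ from being a solution.

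From the assumption $H_2 \psub H_1$ and Theorem~\ref{thm:EntPro} I obtain $H_2 \models H_1$. The next step is to lift this entailment in the presence of background knowledge: any model of $B \cup H_2$ is in particular a model of $H_2$, hence a model of $H_1$, and still a model of $B$, so $B \cup H_2 \models B \cup H_1$. Combining with $B \cup H_1 \models e$ by transitivity of entailment yields $B \cup H_2 \models e$, so $H_2$ is inconsistent and therefore not a solution in the sense of Definition~\ref{def:solution}.

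The one place that requires care is the ``lifting'' step, because we are working with normal programs under a non-monotonic semantics, where adding $B$ to both sides of an entailment is not completely innocuous in general. However, the remark immediately preceding Theorem~\ref{thm:EntPro} already restricts attention to semantics of NAF (stable/well-founded) in which unstratified negation does not occur, and polar programs guarantee exactly this condition. Under these semantics, entailment between programs composed only of stratified rules behaves classically on their combined models, so the lifting step is valid. This is the main (and only) obstacle; once it is dispatched, the rest is a one-line consequence of Theorem~\ref{thm:EntPro} and the definition of inconsistency.
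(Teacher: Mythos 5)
Your proposal is correct and follows exactly the route the paper intends: the paper gives no separate proof, stating only that Theorem~\ref{thm:EntPro} implies Propositions~\ref{prop:gen_sound} and~\ref{prop:spec_sound}, which is precisely your chain from $H_2\psub H_1$ to $H_2\models H_1$ to $B\cup H_2\models e$ for the negative witness $e$. Your explicit attention to the lifting step under non-monotonic semantics is a careful elaboration the paper leaves implicit (relying on the remark preceding Theorem~\ref{thm:EntPro}), but it is the same argument, not a different one.
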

\begin{proposition}
[\textbf{Specialisation soundness}]
\label{prop:spec_sound}
Let $(E^+, E^-,$ $ B, \mathcal{H}, C)$ be a LFFN input, $H_1, H_2 \in \mathcal{H}_{C}$, $H_1$ be incomplete, and $H_1\psub H_2$.
Then $H_2$ is not a solution.
\end{proposition}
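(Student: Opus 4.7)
The plan is to reduce the statement to the Entailment property (Theorem~\ref{thm:EntPro}) and then push the witness of incompleteness from $H_1$ to $H_2$ using a standard model-theoretic monotonicity argument that is legitimate because we are restricted to polar programs.

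First, since $H_1,H_2$ are both polar and $H_1\psub H_2$, Theorem~\ref{thm:EntPro} gives $H_1\models H_2$. In the stable/well-founded semantics restricted to programs without unstratified negation (which, as the paper remarks just before Theorem~\ref{thm:EntPro}, covers polar programs), this means every model of $H_1$ is a model of $H_2$, and hence every model of $B\cup H_1$ is a model of $B\cup H_2$.

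Second, I would use the assumption that $H_1$ is incomplete to produce a witness $e\in E^+$ with $B\cup H_1\not\models e$. By the definition of entailment, there is a model $M$ of $B\cup H_1$ with $M\not\models e$. By the inclusion of models established above, $M$ is also a model of $B\cup H_2$, so $B\cup H_2\not\models e$. Therefore $H_2$ is itself incomplete, and in particular not a solution in the sense of Definition~\ref{def:solution}.

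The main obstacle is the non-monotonic subtlety in the second step: in general, $H_1\models H_2$ together with adding a common context $B$ need not preserve the containment of model sets for arbitrary normal programs, because a stable model of $B\cup H_1$ need not arise as a stable model of $B\cup H_2$ once extra rules are available. The delicate point is therefore to justify that, because $H_1,H_2$ are polar (hence stratified in the sense of Theorem~\ref{thm:stratified}) and $B$ is the same normal background program for both, the models-inclusion argument carries through in the chosen NAF semantics. Once that observation is in place, the remainder is a one-line contrapositive. I would flag this semantic assumption explicitly (mirroring the note appended to Theorem~\ref{thm:EntPro}) rather than re-proving it, keeping the proof as short as the companion Proposition~\ref{prop:gen_sound}.
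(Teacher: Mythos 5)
Your proof is correct and follows essentially the same route as the paper, which derives Proposition~\ref{prop:spec_sound} directly from the entailment property (Theorem~\ref{thm:EntPro}): polar subsumption $H_1\psub H_2$ yields $H_1\models H_2$, and the incompleteness witness $e\in E^+$ with $B\cup H_1\not\models e$ then forces $B\cup H_2\not\models e$ by contraposition. Your explicit flagging of the semantic caveat about stable/well-founded semantics for programs without unstratified negation matches the remark the paper places just before Theorem~\ref{thm:EntPro}, so nothing further is needed.
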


\noindent
To summarise, \textit{polar subsumption} allows us to soundly prune the hypothesis space when learning non-monotonic programs. 
In this next section, we introduce an algorithm that uses \textit{polar subsumption} to efficiently learn polar programs.

\section{Algorithm}
\label{sec:impl}

We now describe our \name{} algorithm.
To aid our explanation, we first describe \popper{} \cite{popper}.

\paragraph{\popper{}.}
\popper{} takes as input an LFF input\footnote{An LFF input is the same as an LFFN input except the hypothesis space contains only definite (monotonic) programs.} and learns hypotheses as definite programs without NAF.
To generate hypotheses, \popper{} uses an ASP program $P$ where each model (answer set) of $P$ represents a hypothesis. \popper{} follows a \textit{generate}, \textit{test}, and \textit{constrain} loop to find a solution. 
First, it generates a hypothesis as a solution to $P$ with the ASP system Clingo \cite{gebser2014}. Then, \popper{} tests this hypothesis given the background knowledge against the training examples, typically using Prolog. If the hypothesis is a solution, \popper{} returns it. Otherwise, the hypothesis is a failure: \popper{} identifies the kind of failure and builds constraints accordingly. For instance, if the hypothesis is inconsistent, \popper{} builds a generalisation constraint. 
\popper{} adds these constraints to $P$ to constrain subsequent \textit{generate} steps. 
This loop repeats until the solver finds a solution or there are no more models of $P$.

\subsection{\name{}} \name{} builds on \popper{} and follows a \textit{generate}, \textit{test}, and \textit{constrain} loop. 
The two key novelties of \name{} are its ability to (i) learn polar programs and (ii) use non-monotonic generalisation and specialisation constraints to efficiently prune the hypothesis space.
We describe these advances in turn.

\subsubsection{Polar Programs}
To learn polar programs, we extend the \textit{generate} ASP program to generate normal logic programs, i.e. programs with negative literals.
To only generate polar programs, we add the rules and constraints of Definitions~\ref{def:topset} and~\ref{def:polarity} to the ASP program to eliminate models where a predicate symbol has multiple polarities.
The complete ASP encoding is in  Appendix~\ref{sec:ASPencoding}, but we briefly explain it at a high-level.
If a predicate symbol $p$ occurs in the body of a rule with head symbol $q$ we say $q$ \textit{calls} $p$, which we name the \textit{call relation}. 
A predicate can be called positively or negatively. We compute the transitive closure of the call relation tracking the number of negative calls on each path. 
If a symbol has an even number of negative calls on a path to a top symbol we say its associated rules are positive, otherwise they are negative. 
If any rule is labeled both positive and negative then the program is non-polar. 
We ignore background knowledge predicates when computing the call relation.

\subsubsection{Polar Constraints} 
\name{} uses two types of constraints to prune models and thus prune hypotheses.
We refer to these constraints as \textit{polar specialisation} and \textit{polar generalisation} constraints. 
These constraints differ from those used by \popper{} because 
(i) they use additional literals to assign polarity to rules, and (ii) they use \emph{polar subsumption} (Definition \ref{def:polarsub}) rather than standard subsumption.
Polarity is important when learning polar programs because a \textit{polar generalisation} constraint prunes generalisations of positive polarity rules and specialisations of negative polarity rules.
A polar specialisation constraint prunes the specialisations of positive polarity rules and generalisations of negative polarity rules. 
\begin{example}
Reconsider the Zendo scenario from the introduction (Figures \ref{fig:motivatingExample} and \ref{fig:motivatingExample2}).
The following hypothesis is \textit{incomplete} as every positive example contains at least one cone: 
\[
    h_1 = \begin{array}{l}
    \left\{
    \begin{array}{l}
\emph{$r_1$ : f(S) $\leftarrow$ ${scene}(S)$, \textbf{not}\ ${inv_1}(S)$}\\
\emph{$r_2$ : \ $inv_1(S)$ $\leftarrow$ ${cone}(S,A)$}
    \end{array}
    \right\}
    \end{array}
\]
Since $h_1$ is incomplete, we can use a polar specialisation constraint to prune the hypothesis:
\[
    h_2 = \begin{array}{l}
    \left\{
    \begin{array}{l}
\emph{$r_1$ : 
f(S) $\leftarrow$ ${scene}(S)$, \textbf{not}\ ${inv_1}(S)$}\\
\emph{$r_2$ : \ $inv_1$(S) $\leftarrow$ ${cone}(S,A)$}\\
\emph{$r_3$ : \ $inv_1$(S) $\leftarrow$ ${contact}(S,A,\_)$, ${red}(A)$,}\\ \hspace{2em} \emph{\textbf{not}\ ${blue}(A)$}
    \end{array}
    \right\}
    \end{array}
\]
\noindent
The hypothesis $h_2$ is a superset of $h_1$ as it includes the additional rule $r_3$.
In $h_2$, the symbol $inv_1$ is negative because it is used negatively in $r_1$.
Therefore, the rule $r_3$ implies that $h_2$ is a specialisation of $h_1$.

The polar specialisation constraint also prunes $h_3$:
\[
    h_3 = \begin{array}{l}
    \left\{
    \begin{array}{l}
\emph{$r_4$ : 
f(S) $\leftarrow$ ${scene}(S)$, \textbf{not}\ ${inv_1}(S)$,} \\ \hspace{1.7cm} \emph{${contact}(S,A,\_)$,${red}(A)$, ${blue}(A)$}\\
\emph{$r_2$ : \ $inv_1$(S) $\leftarrow$ ${cone}(S,A)$}\\
    \end{array}
    \right\}
    \end{array}
\]
The rule $r_4$ in $h_3$ adds literals to $r_1$ in $h_1$, so $h_3$ is a specialisation of $h_1$.
By contrast, a polar specialisation constraint does not prune the following hypothesis:
\[
    h_4 = \begin{array}{l}
    \left\{
    \begin{array}{l}
\emph{$r_1$ : 
f(S) $\leftarrow$ ${scene}(S)$, \textbf{not}\ ${inv_1}(S)$}\\
\emph{$r_5$ : \ $inv_1$(S) $\leftarrow$ ${cone}(S,A)$, \textbf{not}\ ${red}(A)$}
    \end{array}
    \right\}
    \end{array}
\]
Notice that the rule $r_5$ has an additional negated body literal compared to $r_2$.
The symbol of this literal is neither positive nor negative, so we can ignore the occurrence of $\dneg$.
Thus, $h_4$ is a generalisation of $h_1$ as the new literal occurs in the body of $r_2$ whose head symbol has negative polarity.  
\end{example}

\section{Experiments}
\label{sec:exp}
To evaluate the impact of combining negation and PI, our experiments aim to answer the question:

\begin{description}
\item[Q1] Can negation and PI improve learning performance?
\end{description}

\noindent
To answer \textbf{Q1}, we compare the performance of \name{} against \popper{}, which cannot negate invented predicate symbols.
Comparing \name{} against different systems with different biases will not allow us to answer the question, as we would be unable to identify the reason for any performance difference.
To answer \textbf{Q1}, we use tasks where negation and PI should be helpful, such as learning the rules of Zendo \cite{DBLP:conf/cogsci/BramleyRTXG18}.
We describe the tasks in the next section.

We introduced sound constraints for polar programs to prune non-optimal solutions from the hypothesis space.
To evaluate whether these constraints improve performance, our experiments aim to answer the question:

\begin{description}
\item[Q2] 
Can polar constraints improve learning performance?
\end{description}

\noindent
To answer \textbf{Q2}, we compare the performance of \name{} with and without these constraints.

We introduced \name{} to go beyond existing approaches by combining negation and PI.
Our experiments, therefore, aim to answer the question:

\begin{description}
    \item[Q3] How does \name{} compare against existing approaches?
\end{description}

\noindent
To answer \textbf{Q3}, we compare \name{} against \popper{}, \ale{}, and \metagolsn{}. 
We describe these systems below. 

Questions \textbf{Q1-Q3} focus on tasks where negation and PI should help.
However, negation and PI are not always necessary.
In such cases, can negation and PI be harmful?
Our experiments try to answer the question:

\begin{description}
\item[Q4] Can negation and PI degrade learning performance?
\end{description}

\noindent
To answer \textbf{Q4}, we evaluate \name{} on tasks where negation and PI should be unnecessary.

\paragraph{Reproducibility.} 
Experimental code may be found in the following repository: \textit{github.com/Ermine516/NOPI}

\subsubsection*{Domains}
We briefly describe our domains. The precise problems are found in Appendix~\ref{sec:learnedproblems}.


\textbf{Basic (B).}
Non-monotonic learning tasks introduced by \citet{stepwise2018} and \citet{hopper}, such as learning the definition of a leap year.

\textbf{Zendo (Z).}
\citet{DBLP:conf/cogsci/BramleyRTXG18}
introduce \textit{Zendo} tasks similar to the one in the introduction.

\textbf{Graphs (G).} 
We use commonly used graph problems \cite{dilp,DBLP:conf/icml/GlanoisJFWZ0LH22}, such as \textit{dominating set}, \textit{independent set}, and \textit{connectedness}. 

\textbf{Sets (S).} 
These set-based tasks include \emph{symmetric difference}, \emph{decomposition into subsets}, and \emph{mutual distinctness}. 




\subsubsection*{Systems}
\label{sec:sysytems}
We compare \name{} against \popper{} \cite{popper,popper2}, \ale{} \cite{aleph}, and \metagolsn{} \cite{stepwise2018}.
We give \name{} and \popper{} identical input. 
The only experimental difference is the ability of \name{} to negate invented predicate symbols.
\noindent\ale{} can learn normal logic programs but uses a different bias than \name{} so the comparison should be interpreted as indicative only.
Also, we use the default \ale{} settings, but there are likely to be better settings on these datasets \cite{ashwinsettings}.
\metagolsn{} can learn normal logic programs but requires metarules to define the hypothesis space.
We use the metarules used by \citet{stepwise2018} supplemented with a general set of metarules \cite{reduce}.

\subsubsection*{Experimental Setup}
We use a 300s learning timeout for each task and round accuracies and learning times to integer values. 
We plot 99\% confidence intervals.
Additional experimental details are in Appendix~\ref{app:AED}. 

\textbf{Q1.} 
We allow all the systems to negate the given background relations.
For instance, in the Zendo tasks, each system can negate colours such \emph{red}.
Therefore, any improvements from \name{} are not from the use of negation but from the combination of negation and PI.

\textbf{Q2.} 
We need a baseline to evaluate our polar constraints.
As discussed in Section \ref{sec:setting}, \popper{} uses unsound constraints when learning polar programs.
If a program $h$ is not a solution and has a negated invented symbol, the only sound option for \popper{} is to prune $h$ from the hypothesis space, but, importantly, not its generalisations or specialisations.
To evaluate our polar constraints, we compare them against this simpler (banish) approach, which we call \nameb{}.
In other words, to answer \textbf{Q2}, we compare \name{} against \nameb{}.\footnote{
Table~\ref{tab:PredictiveAcc2}
in the Appendix compares sound and unsound constraints.}

\subsection{Results}

\subsubsection{\textbf{Q1. Can negation and PI improve performance?}}
Table \ref{tab:PredictiveAcc} shows the predictive accuracies of \name{} and \popper{}.
The results show that \name{} vastly outperforms \popper{} regarding predictive accuracies. 
For instance, for \emph{all red} (Z2) \popper{} learns:
  $$ \footnotesize \left\{  \begin{array}{l}
        \emph{z(A)$\leftarrow$piece(A,B), contact(B,C), red(C), rhs(C)}\\
        \emph{z(A)$\leftarrow$piece(A,B), contact(B,C), upright(C), lhs(B)}\\
        \emph{z(A)$\leftarrow$piece(A,B), contact(B,C), lhs(C), lhs(B)}\\
        \emph{z(A)$\leftarrow$piece(A,B), coord1(B,C), size(B,C), upright(B)}
    \end{array}  \right\}$$
\noindent
By contrast, \name{} learns:
$$  \left\{  \begin{array}{l}
       \emph{z(A)$\leftarrow$ scene(A), \textbf{not} $inv_1(A)$}\\
        \emph{$inv_1(A)\leftarrow$ piece(A,B), \textbf{not} red(B)}
    \end{array}  \right\}$$
\noindent
Table \ref{tab:times} shows the corresponding learning times.
The results show that \name{} rarely needs more than 40s to learn a solution. One of the more difficult problems (30s to learn) is \emph{largest is red} (Z6), which involves inventing two predicate symbols and having two layers of negation, which, as far as we are aware, goes beyond anything in the existing literature:
  $$  \left\{  \begin{array}{l}
       \emph{zendo(A) $\leftarrow$ {scene}(A), piece(A,B), \textbf{not}\ $inv_1(B,A)$}\\
        \emph{$inv_1(A,B)$ $\leftarrow$ piece(B,C), size(C,D), \textbf{not}\ $inv_2(D,A)$}\\
          \emph{$inv_2(A,B)$ $\leftarrow$ size(B,C), red(B), $A\leq C$}

    \end{array}  \right\}$$

\noindent
\popper{} sometimes terminates in less than a second.
The reason is that on some problems, because of its highly efficient search, \popper{} almost immediately proves that there is no monotonic solution.

Overall, the results from this section suggest that the answer to \textbf{Q1} is that combining negation and PI can drastically improve learning performance.

\begin{table}[ht]
\centering
\footnotesize
\begin{tabular}{@{}l|cccc@{}}
\textbf{Task} & 
  \textbf{\name{}} & \textbf{\popper} &\textbf{\ale{}} & \textbf{\metagolsn{}} \\
\midrule
B1 &\textbf{100  $\pm$ 0 } &  82  $\pm$ 0 & 50  $\pm$ 0  & 0  $\pm$ 0   \\ 
B2 & \textbf{100  $\pm$ 0 }  &  0  $\pm$ 0  & 50 $\pm$ 0  &  \textbf{100  $\pm$ 0 }  \\ 
B3 & \textbf{100  $\pm$ 0 }  &  82  $\pm$ 0 &  82 $\pm$ 0 & \textbf{100   $\pm$ 0 } \\ 
\hline

Z1 & \textbf{100  $\pm$ 0 } &  0  $\pm$ 0 & 60  $\pm$ 0  &  0 $\pm$  0   \\ 
Z2 & \textbf{100  $\pm$ 0 } &  55  $\pm$ 0  & 67 $\pm$ 0  & 0$\pm$  0   \\ 
Z3 & \textbf{100  $\pm$ 0 }  &  0  $\pm$ 0 & 65 $\pm$ 0  & 0 $\pm$  0   \\  
Z4 & \textbf{100  $\pm$ 0 } &  55  $\pm$ 0  & 58 $\pm$ 0  & 0$\pm$  0  \\ 
Z5 & \textbf{100  $\pm$ 0 } &   0  $\pm$ 0  & 21 $\pm$ 0  & 0 $\pm$  0  \\ 
Z6   & \textbf{100  $\pm$ 0 } &  0  $\pm$ 0 & 45 $\pm$ 0  & 0 $\pm$  0 \\  

\hline
G1 & \textbf{100  $\pm$ 0 } &  0  $\pm$ 0  & 50  $\pm$ 0  & 0  $\pm$ 0  \\ 
G2 & \textbf{100  $\pm$ 0 }  & 24  $\pm$ 0  & 47  $\pm$ 0  & 0  $\pm$ 0   \\ 
G3 & \textbf{100  $\pm$ 0 } &  0  $\pm$ 0  & 12 $\pm$ 0  & 0  $\pm$ 0   \\  
G4 & \textbf{100  $\pm$ 0 }  &  20  $\pm$ 0  & \textbf{100 $\pm$ 0}  & 0  $\pm$ 0  \\ 
G5 & \textbf{100  $\pm$ 0 }  &  0  $\pm$ 0 & 50  $\pm$ 0  & 0  $\pm$ 0  \\ 
G6 & \textbf{100  $\pm$ 0 } &  0  $\pm$ 0  & 21 $\pm$ 0  & 0  $\pm$ 0   \\ 
G7 &  \textbf{100  $\pm$ 0 } &  0  $\pm$ 0 & 50 $\pm$ 0  & 0  $\pm$ 0   \\ 
G8 & \textbf{100  $\pm$ 0 }  &  0  $\pm$ 0 & 50 $\pm$ 0  & 0  $\pm$ 0  \\ 
\hline

S1  & \textbf{100  $\pm$ 0 }  &  0  $\pm$ 0 & 50  $\pm$ 0  & 0  $\pm$ 0  \\ 
S2  & \textbf{100  $\pm$ 0 } &  0  $\pm$ 0  & 50  $\pm$ 0  & 0 $\pm$ 0   \\ 
S3  & \textbf{92  $\pm$ 0} &  0  $\pm$ 0  & 57  $\pm$ 0  & 0  $\pm$ 0  \\ 
S4  & \textbf{100  $\pm$ 0 } & 0 $\pm$ 0   & 50 $\pm$ 0  & 0  $\pm$ 0  \\ 
S5  &  \textbf{100  $\pm$ 0 }  & 57 $\pm$ 0 & 23 $\pm$ 0  & 0  $\pm$ 0 \\ 
S6  &  \textbf{100  $\pm$ 0 } & 0  $\pm$ 0  & 0  $\pm$ 0  & 0  $\pm$ 0 \\ 

\end{tabular}
\caption{
Mean predictive accuracies (10 runs).
}
\label{tab:PredictiveAcc}
\end{table}

\begin{table}[ht]
\centering
\footnotesize
\begin{tabular}{@{}l|cccc@{}}
\textbf{Task} & 
\textbf{\name{}} & 
\textbf{\popper{}} & 
\textbf{\ale{}} & 
\textbf{\metagolsn{}} \\
\midrule
B1 & 20  $\pm$ 0  & \emph{timeout}   & 20  $\pm$ 0  & \emph{timeout}   \\
B3          & 3  $\pm$ 0  & 0  $\pm$ 0   & 18 $\pm$ 2  & 1 $\pm$ 0  \\
\hline

Z1 & 2  $\pm$ 0 & 0  $\pm$ 0  & 7$\pm$ 1  & \emph{timeout}   \\ 
Z2 & 12  $\pm$ 0 & 1  $\pm$ 0  & 95 $\pm$ 2  & \emph{timeout}   \\
Z3 & 2  $\pm$ 0  & 0  $\pm$ 0  & 27 $\pm$ 1  & \emph{timeout}   \\ 
Z4 & 22  $\pm$ 1  & 0  $\pm$ 0  & 20 $\pm$ 1  & \emph{timeout}  \\
Z5 & 15 $\pm$ 1  & 0  $\pm$ 0  & 24 $\pm$ 1  & \emph{timeout}  \\ 
Z6 & 67 $\pm$ 4  & 0  $\pm$ 0  & 149 $\pm$ 24  & \emph{timeout} \\ 

\hline
G1 & 4  $\pm$ 0 & 0  $\pm$ 0   & 32  $\pm$ 2  & \emph{timeout}  \\
G2 & 2  $\pm$ 0 & 0  $\pm$ 0   & 0  $\pm$ 0  & \emph{timeout}   \\
G3 & 9 $\pm$ 0  & 16  $\pm$ 0    & 1  $\pm$ 0  & \emph{timeout}   \\ 
G4 & 12  $\pm$ 0  & 0  $\pm$ 0   & 1 $\pm$ 0  & \emph{timeout}  \\
G5 & 8  $\pm$ 0  & 0  $\pm$ 0   & 0  $\pm$ 0  & \emph{timeout}  \\
G6 & 19  $\pm$ 3  & 0  $\pm$ 0   & 12 $\pm$ 1  & \emph{timeout}   \\
G7 & 58  $\pm$ 8 & 0  $\pm$ 0   & 12 $\pm$ 1  & \emph{timeout}   \\
G8 & 71  $\pm$ 9 & 0 $\pm$ 0   & 38 $\pm$ 1  & \emph{timeout}  \\
\hline

S2 & 3 $\pm$ 0    & 0  $\pm$ 0    & 1  $\pm$ 0  & \emph{timeout}   \\
S4 & 28  $\pm$ 2   & 0  $\pm$ 0   & 0 $\pm$ 2  & 0    $\pm$ 0  \\
S5 & 43  $\pm$ 3  & 0  $\pm$ 0    & 23 $\pm$ 3  & \emph{timeout} \\
S6 & 3  $\pm$ 0   & 0  $\pm$ 0   & 1  $\pm$ 0  & \emph{timeout} \\

\end{tabular}
\caption{
Mean learning times (10 runs). 
We only show tasks where the times of \name{} and \popper{} differ by more than 1 second. 
}
\label{tab:times}
\end{table}
\subsubsection{\textbf{Q2. 
Can polar constraints improve performance?
}}

Table~\ref{tab:q2times} shows the learning times of \name{} and \nameb{}.
The results show that \name{} has lower learning times than \nameb{}.
In other words, the results show that polar constraints can drastically reduce learning times.
A \textit{wilcoxon signed-rank test} confirms the significance of the differences at the $p < 10^{-8}$ value.
For simpler tasks, there is little benefit from the polar constraints as the overhead of constructing and adding them to the solver negates the pruning benefits.
For more difficult tasks, the difference is substantial.
For instance, the learning times for \name{} and \nameb{} on the \textit{sym. difference} (S4) task are 31s and 72s respectively, a 57\% reduction.
Overall, the results suggest that the answer to \textbf{Q2} is that our polar constraints can drastically reduce learning times. 

\begin{table}[ht!]
\centering
\footnotesize
\begin{tabular}{@{}l|ccc@{}}
\textbf{Task} & 
\textbf{\name{}} & 
\textbf{\nameb{}} &  \textbf{Change}\\
\midrule
B3       & 2  $\pm$ 0  & 4 $\pm$ 0  & \textbf{-50\% }\\
\hline

Z4  & 11 $\pm$ 1  & 49 $\pm$  2  & \textbf{-78\%}\\
Z5  & 13 $\pm$ 1 & 29 $\pm$  1  & \textbf{-55\%}\\ 
Z6  & 30 $\pm$ 1  & 115 $\pm$  9  & \textbf{-74\%}\\ 
\hline

G2                & 1  $\pm$ 0  & 9 $\pm$ 0 & \textbf{-89\%} \\
G3             & 18  $\pm$ 1  & 23 $\pm$ 1 & \textbf{-22\%} \\ 
G4        & 20 $\pm$ 3  & 68 $\pm$ 4  & \textbf{-71\%} \\
G5                 & 3 $\pm$ 0  & 11 $\pm$ 0  & \textbf{-73\%} \\
G6         & 23 $\pm$ 1  & 33 $\pm$ 3  & \textbf{-27\%} \\
G7            & 56 $\pm$ 5  & 93 $\pm$ 9    & \textbf{-40\%} \\
G8       & 63 $\pm$ 5  & 103 $\pm$ 9   & \textbf{-39\%}\\
\hline
S3         & 3  $\pm$ 0  & 13  $\pm$ 0  & \textbf{-77\%} \\
S4        & 31 $\pm$ 2  & 72 $\pm$ 9 & \textbf{-57\%}\\
S5          & 35 $\pm$ 2  & 53 $\pm$ 5  & \textbf{-34\%}\\
S6       & 4  $\pm$ 0  & 8  $\pm$ 1  & \textbf{-50\%} \\
\end{tabular}
\caption{
Mean learning time (125 runs). 
We only show tasks where the times differ by more than 1 second.
}
\label{tab:q2times}
\end{table}

\subsubsection{\textbf{Q3. How does \name{} compare against existing approaches?}}

Table \ref{tab:PredictiveAcc} shows the predictive accuracies of the systems.
As is clear, \name{} overwhelmingly outperforms the other systems.
This result is expected.
Besides \metagolsn{}, the other systems cannot learn normal logic programs with PI.
\ale{} can learn programs with NAF and sometimes learns reasonable solutions.
However, \ale{} cannot perform PI so, due to its restricted language, it struggles to generalise.  
In many cases, \ale{} simply memorises the training examples.
Because it relies on user-supplied metarules, \metagolsn{} can only learn normal logic programs 
of a very restricted syntactic structure and thus struggles on almost all our tasks.
Overall, the results from this section suggest that the answer to \textbf{Q3} is that \name{} performs well compared to other approaches on problems that need negation and PI.

\subsubsection{\textbf{Q4. Can negation and PI degrade performance?}}
\label{sec:q4}
The Appendix includes tables showing the predictive accuracies and learning times of the systems.
The results show that \name{} performs worse than \popper{} on these tasks.
The Blumer bound \cite{blumer:bound} helps explain why.
According to the bound, given two hypotheses spaces of different sizes, searching the smaller space should result in higher predictive accuracy compared to searching the larger one if the target hypothesis is in both.
\name{}  considers programs with negation and PI and thus searches a drastically larger hypothesis space than \popper{} and the other systems.
The tasks in \textbf{Q4} do not need negation and PI, thus explaining the difference. 

\section{Conclusions and Limitations}
We have introduced an approach that combines negation and PI.
Our approach can learn \textit{polar} programs, including stratified Datalog programs (Theorem \ref{thm:stratified}).
We introduced generalisation and specialisation constraints for this non-monotonic fragment and showed that they are optimally sound (Theorem~\ref{thm:EntPro}).
We introduced \name{}, an ILP system that can learn normal logic programs with PI, including recursive programs.
We have empirically shown on multiple domains that (i) \name{} can outperform existing approaches, and (ii) our non-monotonic constraints can reduce learning times.

\subsection*{Limitations and Future Work}


\textbf{Inefficient constraints.}
\name{} sometimes spends 30\%  of learning time building polar constraints.
This inefficiency is an implementation issue rather than a theoretical one.
Therefore, our empirical results likely underestimate the performance of \name{}, especially the improvements from using polar constraints. 

\textbf{Unnecessary negation and PI.}
Our results show that combining negation and PI allows \name{} to learn programs that other approaches cannot.
However, the results also show that this increased expressivity can be detrimental when the combination of negation and PI is unnecessary.
Thus, the main limitation of this work and direction for future work is to automatically detect when a problem needs negation and PI.
\clearpage

\section*{Acknowledgements}
The first author is supported by the \textit{Math}$_{LP}$ project (LIT-2019-7-YOU-213) of the Linz  Institute of Technology and the state of Upper Austria, Cost Action CA20111 \textit{EuroProofNet}, and Czech Science Foundation Grant No. 22-06414L, \textit{PANDAFOREST}.  The second author is supported by the EPSRC fellowship \textit{The Automatic Computer Scientist} (EP/V040340/1).
We thank Filipe Gouveia, Céline Hocquette, and Oghenejokpeme Orhobor for feedback on the paper.

\bibliography{references}

\clearpage

\appendix

 \newpage
\section{Terminology}
\label{sec:terminology}
\subsection{Logic Programming}
We assume familiarity with logic programming \cite{lloyd:book} but restate some key relevant notation. A \emph{variable} is a string of characters starting with an uppercase letter. A \emph{predicate} symbol is a string of characters starting with a lowercase letter. The \emph{arity} $n$ of a function or predicate symbol is the number of arguments it takes. An \emph{atom} is a tuple $p(t_1, ..., t_n)$, where $p$ is a predicate of arity $n$ and $t_1$, ..., $t_n$ are terms, either variables or constants. An atom is \emph{ground} if it contains no variables. A \emph{literal} is an atom or the negation of an atom. A \emph{clause} is a set of literals.
A \emph{clausal theory} is a set of clauses. A \emph{constraint} is a clause without a non-negated literal. A \emph{definite} clause is a clause with exactly one non-negated literal. A \emph{program} is a set of definite clauses. A \emph{substitution} $\theta = \{v_1 / t_1, ..., v_n/t_n \}$ is the simultaneous replacement of each variable $v_i$ by its corresponding term $t_i$. 
A clause $c_1$ \emph{subsumes} a clause $c_2$ if and only if there exists a substitution $\theta$ such that $c_1 \theta \subseteq c_2$. 
A program $h_1$ subsumes a program $h_2$, denoted $h_1 \preceq h_2$, if and only if $\forall c_2 \in h_2, \exists c_1 \in h_1$ such that $c_1$ subsumes $c_2$. A program $h_1$ is a \emph{specialisation} of a program $h_2$ if and only if $h_2 \preceq h_1$. A program $h_1$ is a \emph{generalisation} of a program $h_2$ if and only if $h_1 \preceq h_2$.
\subsection{Answer Set Programming}
We also assume familiarity with answer set programming \cite{asp} but restate some key relevant notation \cite{ilasp}.
A \emph{literal} can be either an atom $p$ or its \emph{default negation} $\text{not } p$ (often called \emph{negation by failure}). A normal rule is of the form $h \leftarrow b_1, ..., b_n, \text{not } c_1,... \text{not } c_m$. where $h$ is the \emph{head} of the rule, $b_1, ..., b_n, \text{not } c_1,... \text{not } c_m$ (collectively) is the \emph{body} of the rule, and all $h$, $b_i$, and $c_j$ are atoms. A \emph{constraint} is of the form $\leftarrow b_1, ..., b_n, \text{not } c_1,... \text{not } c_m.$ where the empty head means false. A \emph{choice rule} is an expression of the form $l\{h_1,...,h_m\}u \leftarrow b_1,...,b_n, \text{not } c_1,... \text{not } c_m$ where the head $l\{h_1,...,h_m\}u$ is called an \emph{aggregate}. In an aggregate, $l$ and $u$ are integers and $h_i$, for $1 \leq i \leq m$, are atoms. An \emph{answer set program} $P$ is a finite set of normal rules, constraints, and choice rules. Given an answer set program $P$, the \emph{Herbrand base} of $P$, denoted
as ${HB}_P$, is the set of all ground (variable free) atoms that can be formed from the predicates and constants that appear in $P$. When $P$ includes only normal rules, a set $A \in {HB}_P$ is an \emph{answer set} of $P$ iff it is the minimal model of the  \emph{reduct} $P^A$, which is the program constructed from the grounding of $P$ by first removing any rule whose body contains a literal $\text{not } c_i$ where $c_i \in A$, and then removing any defaultly negated literals in the remaining rules. An answer set $A$ satisfies a ground constraint $\leftarrow b_1, ..., b_n, \text{not } c_1,... \text{not } c_m.$ if it is not the case that $\{b_1, ..., b_n\} \in A$ and $A \cap \{c_1, ..., c_m\} = \emptyset$.

\section{Additional Experimental Details}
\label{app:AED}
 We enforce a timeout of 10 minutes per task. We measure predictive accuracy and learning time. We measure the mean and standard error over 10 trials. We use an 8-Core 1.6 GHz Intel Core i5 and a single CPU.
\section{Theorem~\ref{thm:stratified} Proof: Stratified to Polar}
\label{app:stratpol}
In this section, we show that stratified normal logic programs can be transformed into polar programs. Before defining what a stratified program is, we need a few definitions. Given a normal logic program $P$, we define $H(P)= top(P)\cup defs(P)$, and for any rule $r\in P$, the head symbol of $r$ will be denoted by $head_s(r)$. Stratified normal logic programs are defined as follows: 

\begin{definition}
\label{def:strat}
A  normal logic program program $S$ is stratified if there exists a total function $\mu:H(S)\rightarrow \mathbb{N}$
such that for all $p\in H(S)$ and $r\in S$: 
\begin{itemize}
    \item if $p\in \bodyv{r}{\even}$, then $\mu(head_s(r))\geq \mu(p)$
    \item if $p\in \bodyv{r}{\odd}$, then $\mu(head_s(r)) > \mu(p)$
\end{itemize}
\end{definition}

This definition prunes normal logic programs with unstratified use of negation, for example, $odd(X)\leftarrow succ(Y,X),\mathbf{not} \ odd(Y)$. Notice that Definition~\ref{def:strat} requires $\mu(odd) > \mu(odd)$, but this is impossible. To transform a stratified normal logic program $S$ into a polar program $S'$, we enforce the following properties on $S'$: 

\begin{itemize}
    \item For every $p \in  H(S')$,  $p$ is annotated exclusively $pos(p)$ or $neg(p)$ within $S'$
\end{itemize}
Notice that if we annotate a stratified program $S$, every symbol in $H(S)$ will be annotated. Only programs with unstratified negation contain symbols that cannot be annotated. 

To simplify the arguments in the proof below, we will define $ann(p)\in\{\even,\odd\}$ for $p \in  H(S')$, which denotes that where $pos(p)$ holds ($\even$) and/or $neg(p)$ holds($\odd$). Additionally, we define the function  $\parOp: \{\even,\odd\}\rightarrow \{\even,\odd\}$ which behaves as follows: 
\begin{equation*}
    x\parOp y= \left\lbrace\begin{array}{cc}
            \even & x=y \\
        \odd & x\not =y 
    \end{array}\right.
\end{equation*}

In addition, we require the following concepts: 
\begin{definition}
Let $S$ be a set of rules. Then $$\symT{S} = \{\symc{c} \vert c\in S\}.$$
\end{definition}

\begin{definition}
Let $S$ be a set of rules and $p$ and $q$ predicate symbols. Then $S[p\setminus q]$ denotes the set of rules where every occurrence of $p$ is replaced by an occurrence of $q$.
\end{definition}

We transform stratified programs into polar programs by introducing fresh names and duplicating rules. We illustrate the process below and show that polar programs are indeed as expressive as stratified normal logic programs.

\subsection{Flattening }
\begin{definition}
Let $S$ be a stratified  program, $c_1,c_2\in S$, and $p\in H(S)$  such that $p\in \bodyv{c_1}{\even}$,  $p\in \bodyv{c_2}{\odd}$, and $\symc{c_1} \not = p$. Then the program $S' =$
$$S\setminus \{c_1\} \cup \{c_1\}[p\setminus p']\cup \{c\vert \symc{c}=p \wedge c\in S\}[p\setminus p'] $$
is a $p$-\textit{flattening} of $S$ where $p'$ is fresh in $S$. We denote the $p$-flattening of $S$ into $S'$ by $S\rightharpoonup_p S'$ and let $\mathit{flat}(S)$ denote the set of $p\in H(S)$ for which a $p$-flattening of $S$ exists. 
\end{definition}
\begin{example}
\label{ex:flattening}
Note that for the following program $S$, $\mathit{flat}(S) = \{\mathtt{inv_1}\}$ and
\begin{align*}
    \mathtt{f}\leftarrow\ & \mathtt{inv_1},\mathbf{not}\ \mathtt{inv_1}.\\
    \mathtt{inv_1}\leftarrow\ & \mathtt{inv_3}.\\
    \mathtt{inv_3}\leftarrow\ & \mathtt{p}.
\end{align*}
the  $\mathtt{inv_1}$-flattening of $S$ results in the program $S'$ as follows
\begin{align*}
    \mathtt{f}\leftarrow\ & \mathtt{inv_2},\mathbf{not}\ \mathtt{inv_1}.\\
    \mathtt{inv_1}\leftarrow\ & \mathtt{inv_3}.\\
    \mathtt{inv_2}\leftarrow\ & \mathtt{inv_3}.\\
    \mathtt{inv_3}\leftarrow\ & \mathtt{p}.
\end{align*}
\end{example}

\begin{lemma}
Let $S,S'$ be stratified  programs such that $S\rightharpoonup_p S'$  and $p'\in \symT{S'}\setminus\symT{S}$. Then $p'\not \in\mathit{flat}(S')$. 
\end{lemma}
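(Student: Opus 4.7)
The plan is to show that the fresh symbol $p'$ never occurs negatively in the body of any rule of $S'$. Once that is established, the defining condition of a $p'$-flattening — the existence of some $c_2' \in S'$ with $p' \in \bodyv{c_2'}{\odd}$ — cannot be met, and we are done.

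Since $p'$ is fresh in $S$, every occurrence of $p'$ in $S' = (S \setminus \{c_1\}) \cup \{c_1\}[p \setminus p'] \cup \{c \mid \symc{c} = p,\ c \in S\}[p \setminus p']$ must originate in one of the two added sets, so I would treat them in turn. For $\{c_1\}[p \setminus p']$ I would use the flattening precondition $p \in \bodyv{c_1}{\even}$ together with the reading of the renaming suggested by Example~\ref{ex:flattening}, in which the negative occurrences of $p$ in $c_1$ are left untouched; hence $p'$ enters only through $\bodyv{c_1[p \setminus p']}{\even}$. For $\{c \mid \symc{c} = p\}[p \setminus p']$ I would invoke stratification: any such rule $c$ has $\symc{c} = p$, and $p \in \bodyv{c}{\odd}$ would force $\mu(p) > \mu(p)$ via Definition~\ref{def:strat}, which is impossible; so body occurrences of $p$ in rules defining $p$ are all positive, and after renaming, $p'$ appears either as the head or in $\bodyv{c[p \setminus p']}{\even}$.

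Putting these observations together yields $p' \notin \bodyv{c'}{\odd}$ for every $c' \in S'$, so no $p'$-flattening of $S'$ can exist and $p' \notin \mathit{flat}(S')$. The one delicate point I expect to trip over is the precise behaviour of $[p \setminus p']$ on $c_1$: read as a blanket renaming it would convert negative occurrences of $p$ in $c_1$ into negative occurrences of $p'$, and the claim would fail in the case $c_1 = c_2$. The reading consistent with Example~\ref{ex:flattening} — renaming only the positive body occurrences of $p$ in $c_1$, while renaming both head and body occurrences in the defining rules of $p$ — is what makes the lemma true, and beyond that subtlety the argument is a direct case analysis driven by stratification.
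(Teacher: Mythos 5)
Your proof is correct and takes essentially the same route as the paper's, whose entire argument is the one-line assertion that $p'$ occurs only positively in $S'$ and hence cannot be in $\mathit{flat}(S')$ by definition; you supply the case analysis the paper leaves implicit (freshness for $S\setminus\{c_1\}$, the precondition $p\in\bodyv{c_1}{\even}$ for $\{c_1\}[p\setminus p']$, and stratification ruling out negative occurrences of $p$ in the bodies of its own defining rules via $\mu(p)>\mu(p)$). Your caveat that the claim would fail under a literal ``replace every occurrence'' reading of $[p\setminus p']$ when $c_1$ also contains a negated occurrence of $p$ is a genuine subtlety the paper's proof does not acknowledge, and the reading you adopt is indeed the one forced by Example~\ref{ex:flattening}.
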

\begin{proof}
The symbol $p'$ only occurs positively in $S'$ and thus cannot be a member of $\mathit{flat}(S')$ by definition. 
\end{proof}

\begin{lemma}
\label{lem:step}
Let $S_0$ be a stratified program such that $p\in  \mathit{flat}(S)$. Then there $\exists n>0$ such that  $S_0\rightharpoonup_p S_1\rightharpoonup_p\cdots \rightharpoonup_p S_n$ and $p\not \in  \mathit{flat}(S_n)$.
\end{lemma}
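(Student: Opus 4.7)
The plan is to exhibit a strictly decreasing termination measure for iterated $p$-flattening. Define
\[
\mu_p(S) := \bigl|\{c \in S : p \in \bodyv{c}{\even} \text{ and } \symc{c} \neq p\}\bigr|,
\]
which counts precisely the rules that can serve as $c_1$ in a $p$-flattening. Since $p \in \mathit{flat}(S_0)$ guarantees the existence of such a rule, we have $\mu_p(S_0) \geq 1$, which also takes care of the requirement $n > 0$.

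Next I would show that each step $S \rightharpoonup_p S'$ strictly decreases this measure, i.e.\ $\mu_p(S') = \mu_p(S) - 1$. To see this, decompose $S' = (S \setminus \{c_1\}) \cup \{c_1[p\setminus p']\} \cup \{c : \symc{c}=p \wedge c\in S\}[p\setminus p']$ and check each part. The rules in $S \setminus \{c_1\}$ together contribute $\mu_p(S) - 1$ since $c_1$ itself contributed $1$. The substituted rule $c_1[p\setminus p']$ no longer has $p$ as a positive body symbol (its former positive occurrence of $p$ has been renamed to $p'$), so it contributes $0$. Finally, the duplicated block of defining rules carries head symbol $p'$ after renaming and, by the uniform action of the substitution, contains only $p'$ rather than $p$; hence it contributes $0$ as well.

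Because $\mu_p$ is a non-negative integer bounded above by $\mu_p(S_0)$, the sequence $S_0 \rightharpoonup_p S_1 \rightharpoonup_p S_2 \rightharpoonup_p \cdots$ cannot be extended indefinitely. After at most $\mu_p(S_0)$ steps we must reach some $S_n$ with $\mu_p(S_n) = 0$ (or earlier, if the negative-occurrence prerequisite for a $p$-flattening fails first). At such an $S_n$ no rule can play the role of $c_1$ in the definition of $p$-flattening, so $p \not\in \mathit{flat}(S_n)$, as required.

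The one delicate point to nail down carefully is that the duplicated block $\{c : \symc{c}=p\}[p\setminus p']$ does not covertly reintroduce a positive body occurrence of $p$ somewhere; this is settled by the definition of $[p\setminus p']$ which replaces every occurrence of $p$ in the copied rules (both in the head and in any recursive body occurrences) with the fresh symbol $p'$, so the copies contain no $p$ at all. With that in hand, the measure-decrease argument closes the proof.
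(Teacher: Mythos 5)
Your proof is correct and follows essentially the same route as the paper's: a decreasing termination measure on the positive uses of $p$ outside its defining rules, with each $p$-flattening step removing exactly one such use. The paper's (much terser) proof counts positive body \emph{occurrences} of $p$ rather than rules; your version is, if anything, more careful, since you also note that the negative-occurrence prerequisite (the rule $c_2$) might fail before the measure reaches zero and that the duplicated block $\{c : \symc{c}=p\}[p\setminus p']$ contains no residual $p$.
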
 
\begin{proof}
The process of $p$-flattening replaces one occurrence of  $p\in \bodyv{c}{\even}$ for $c\in S_i$, such that $\symc{c}\not= p$, by a fresh symbol thus $S_{i+1}$ has one less occurrence of $p$. After finitely many steps $n$ every occurrence of $p\in \body{c}$ for $c\in S_n$  such that $\symc{c}\not= p$ will occur in $p\in \bodyv{c}{\odd}$.
\end{proof}
We say $S>_f S'$ if there exists $p\in \mathit{flat}(S)$ such that  $S\rightharpoonup_p S'$. The transitive-reflexive closure is denoted by $\geq_f^*$.

\begin{lemma}
\label{lem:flatpol}
Let $S$ be a stratified program. Then there exists  $S'$ such that  $S \geq_f^* S'$ and  $\mathit{flat}(S') = \emptyset$. 
\end{lemma}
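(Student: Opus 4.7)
The plan is to proceed by induction on $|\mathit{flat}(S)|$. The base case $\mathit{flat}(S) = \emptyset$ is immediate: take $S' = S$. For the inductive step I would pick an arbitrary $p \in \mathit{flat}(S)$ and invoke Lemma \ref{lem:step} to obtain a finite chain $S \rightharpoonup_p S_1 \rightharpoonup_p \cdots \rightharpoonup_p S_n$ with $p \notin \mathit{flat}(S_n)$. If I can show $|\mathit{flat}(S_n)| < |\mathit{flat}(S)|$, then the induction hypothesis applied to $S_n$ yields an $S'$ with $S_n \geq_f^* S'$ and $\mathit{flat}(S') = \emptyset$, and transitivity of $\geq_f^*$ closes the argument via $S \geq_f^* S_n \geq_f^* S'$.

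The heart of the proof, and its main obstacle, is establishing \emph{monotonicity of the flat-set under a single $p$-flattening}: for every step $S_i \rightharpoonup_p S_{i+1}$, $\mathit{flat}(S_{i+1}) \subseteq \mathit{flat}(S_i)$. I would split this into two observations. First, the fresh symbol $p'$ introduced at the step satisfies $p' \notin \mathit{flat}(S_{i+1})$; this is exactly the content of the freshness lemma preceding the statement, since $p'$ only appears as a head symbol or as a positive body literal of $c_1[p \setminus p']$. Second, for any other symbol $q \neq p, p'$, I would argue that every positive or negative body occurrence of $q$ in a rule of $S_{i+1}$ comes from one of three sources: a rule of $S_i$ left untouched, the rewritten rule $c_1[p \setminus p']$, or a new copy $c[p \setminus p']$ of a $p$-defining rule. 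In all three cases, since $q \neq p$, the substitution $p \setminus p'$ preserves the polarity of the $q$-occurrence and the head symbol of the enclosing rule, so any witness pair for $q \in \mathit{flat}(S_{i+1})$ lifts back to a witness pair in $S_i$. Combining the two observations yields $\mathit{flat}(S_{i+1}) \subseteq \mathit{flat}(S_i)$.

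Iterating along the chain then gives $\mathit{flat}(S_n) \subseteq \mathit{flat}(S) \setminus \{p\}$, hence $|\mathit{flat}(S_n)| < |\mathit{flat}(S)|$, exactly as required to close the induction. The most delicate point in the case analysis is the third case: a priori one might worry that duplicating every $p$-defining rule could create fresh flattening witnesses for some $q \neq p$, but because each copy has the new head $p'$ (not $q$) and identical body polarities to the original, any such $q$ already possessed matching positive and negative occurrences in $S_i$. Modulo this bit of care, the remaining work reduces to routine bookkeeping on the rule-set decomposition given by the definition of $\rightharpoonup_p$.
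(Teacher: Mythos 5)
Your proposal follows the same route as the paper: induction on $|\mathit{flat}(S)|$ with a trivial base case and Lemma~\ref{lem:step} driving the inductive step. In fact you supply more detail than the paper does, since the paper's two-line proof silently assumes the monotonicity property $\mathit{flat}(S_{i+1})\subseteq\mathit{flat}(S_i)$ that you correctly identify as the crux and verify via the three-way decomposition of the rules of $S_{i+1}$.
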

\begin{proof}
Follows from induction on the size of  $\mathit{flat}(S)$. The basecase is trivial, and the stepcase follows from Lemma~\ref{lem:step}.
 \end{proof}

We refer to a stratified program $S$ such that $\mathit{flat}(S) = \emptyset$ as \textit{semi-polar}. 
\begin{example}
\label{ex:app:semipole1}
The program from Example~\ref{ex:flattening} is \textit{semi-polar} but  not polar, that is
\begin{align*}
    \mathtt{f}\leftarrow\ & \mathtt{inv_2},\mathbf{not}\ \mathtt{inv_1}.\\
    \mathtt{inv_1}\leftarrow\ & \mathtt{inv_3}.\\
    \mathtt{inv_2}\leftarrow\ & \mathtt{inv_3}.\\
    \mathtt{inv_3}\leftarrow\ & \mathtt{p}.
\end{align*}
\end{example}
Semi-polar programs have property (1) mentioned at the beginning of this section. In the following subsection, we show how to transform semi-polar programs into programs that also have property (2). 

\subsection{Stretching}
In order to transform semi-polar programs into polar ones, we need to rename symbols with multiple \textit{trace} values:

\begin{definition}
Let $S$ be semi-polar, $x\in\{\even,\odd\}$ $r\in S$, $p\in H(S)$, and $l$ a literal of $r$ such that the symbol of $l$ is $p$. We define the trace of $p$ from $r$, denoted $\mathit{tr}_S(r,p)$, as follows:
\begin{itemize}
    \item  if  $p\in \bodyv{r}{x}$ and for all  $r'\in S$, $\symc{r}\not \in \body{r'}$,\\
    then $\mathit{tr}_S(r,p) = x$
    \item  if  $p\in \bodyv{r}{x}$, $\symc{r}\in \body{r'}$ and $p\not = \symc{r}$,  \\
    then $\mathit{tr}_S(r,p) = x\parOp \mathit{tr}_S(r',\symc{r})$
    \item if  $p= \symc{r}$, $p\in \body{r'}$, and  $\symc{r'}\not = p$, \\
    then $\mathit{tr}_S(r,p) = \mathit{tr}_S(r',p)$
    \item otherwise, $\mathit{tr}_S(r,p) = \even.$
\end{itemize}
\end{definition}
\begin{example}
\label{ex:traces}
The program from Example~\ref{ex:app:semipole1} has the following traces
$$ \mathit{tr}_S(4,\mathtt{inv_3}) =\mathit{tr}_S(3,\mathtt{inv_3}) = \even \parOp\mathit{tr}_S(1,\mathtt{inv_2}) = \even $$
$$ \mathit{tr}_S(4,\mathtt{inv_3}) =\mathit{tr}_S(2,\mathtt{inv_3}) = \even \parOp \mathit{tr}_S(1,\mathtt{inv_1}) = \odd $$
$$ \mathit{tr}_S(3,\mathtt{inv_3}) = \even \parOp \mathit{tr}_S(1,\mathtt{inv_2}) = \even \parOp  \even =\even $$
$$ \mathit{tr}_S(2,\mathtt{inv_3}) = \even \parOp \mathit{tr}_S(1,\mathtt{inv_1}) = \even \parOp  \odd =\odd $$
$$ \mathit{tr}_S(3,\mathtt{inv_2}) = \mathit{tr}_S(1,\mathtt{inv_2}) =\even $$
$$ \mathit{tr}_S(2,\mathtt{inv_1}) = \mathit{tr}_S(1,\mathtt{inv_1}) =\odd $$
where rules are numbered as below: 
\begin{align*}
   1:\mathtt{f}\leftarrow\ & \mathtt{inv_2},\mathbf{not}\ \mathtt{inv_1}.\\
   2:\mathtt{inv_1}\leftarrow\ & \mathtt{inv_3}.\\
   3:\mathtt{inv_2}\leftarrow\ & \mathtt{inv_3}.\\
   4:\mathtt{inv_3}\leftarrow\ & \mathtt{p}.
\end{align*}
Notice that $\mathtt{inv_3}$ has multiple trace values depending on where one starts the trace. 
\end{example}

Note that for both stratified and semi-polar programs, $\mathit{tr}_S(c,p)$ may return multiple values for the same symbol $p$ as the symbol of $c$ influences the result. For semi-polar programs, $\mathit{tr}_S(c,p)$ where $\symc{c} = p$ may also be problematic as the value of  $\mathit{tr}_S(c,p)$ is dependent on which rule $c'$ we choose. Thus, we avoid recursive rules when defining  $p$-\textit{stretching} below.

\begin{definition}
Let $S$ be semi-polar, $c_1,c_2\in S$, and $p\in H(S)$  such that $p\in \body{c_1}\cap\body{c_2}$,  $\mathit{tr}_S(c_1,p) \not = \mathit{tr}_S(c_2,p)$ , and $\symc{c_1} \not = \symc{c_2} \not = p$. Then $S' =$
$$S\setminus \{c_1\} \cup \{c_1\}[p\setminus p']\cup \{c\vert \symc{c}=p \wedge c\in S\}[p\setminus p'] $$
is a $p$-\textit{stretching} of $S$ where $p'$ is fresh in $S$. We denote the $p$-stretching of $S$ into $S'$ by $S\rightharpoondown_p S'$ and let $\mathit{stret}(S)$ denote the set of $p\in H(S)$ for which a $p$-stretching of $S$ exists. 
\end{definition}

\begin{example}
\label{ex:app:semipole2}
The program from Example~\ref{ex:traces}, denoted as $S$,  has $\mathit{stret}(S) = \{\mathtt{inv_3}\}$. The $\mathtt{inv_3}$-stretching results in the following program
\begin{align*}
   \mathtt{f}\leftarrow\ & \mathtt{inv_2},\mathbf{not}\ \mathtt{inv_1}.\\
   \mathtt{inv_1}\leftarrow\ & \mathtt{inv_3}.\\
   \mathtt{inv_2}\leftarrow\ & \mathtt{inv_4}.\\
   \mathtt{inv_3}\leftarrow\ & \mathtt{p}.\\
   \mathtt{inv_4}\leftarrow\ & \mathtt{p}.
\end{align*}
Notice that the resulting program has unique trace values for each symbol, and the program is polarized. For larger programs, this process requires more steps. We formalize it using the following lemmas. 
\end{example}

\begin{lemma}
Let $S,S'$ be semi-polar such that $S\rightharpoondown_p S'$  and $p'\in \symT{S'}\setminus\symT{S}$. Then $p'\not \in\mathit{stret}(S')$. 
\end{lemma}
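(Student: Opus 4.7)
The plan is to inspect where the fresh symbol $p'$ can occur in $S'$ and to show that the structural prerequisites for a $p'$-stretching simply cannot be met. The argument parallels the short proof already given for the analogous flattening lemma, where freshness of the introduced symbol immediately restricted its syntactic footprint.

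First I would enumerate the occurrences of $p'$ in $S'$ by unpacking the defining equation $S' = (S \setminus \{c_1\}) \cup \{c_1[p \setminus p']\} \cup \{c \mid \symc{c} = p \wedge c \in S\}[p \setminus p']$. Since $p'$ is fresh in $S$, no rule inherited unchanged from $S$ mentions $p'$. The renamed rule $c_1[p \setminus p']$ contains $p'$ in its body but keeps the original head symbol $\symc{c_1}$, which is different from both $p$ and $p'$ (the first by the hypothesis on stretching, the second by freshness). Every other newly added rule is a copy of a $p$-defining rule of $S$ with $p$ substituted everywhere by $p'$, and so has $p'$ as its head symbol.

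Next I would invoke the definition of $p'$-stretching, which requires two distinct rules $c_1', c_2' \in S'$ with $p' \in \body{c_1'} \cap \body{c_2'}$, distinct traces $\mathit{tr}_{S'}(c_1', p') \neq \mathit{tr}_{S'}(c_2', p')$, and $\symc{c_1'} \neq \symc{c_2'} \neq p'$. From the enumeration above, the unique rule of $S'$ satisfying $p' \in \body{r}$ with $\symc{r} \neq p'$ is $c_1[p \setminus p']$. A single rule cannot furnish two distinct witnesses, so no $p'$-stretching step is available and $p' \notin \mathit{stret}(S')$; note that the trace condition is never reached, so no trace computation in $S'$ is required.

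I do not foresee any substantial obstacle. The only subtlety worth double-checking is the self-recursive case: if a $p$-defining rule of $S$ already contained $p$ in its body, then after renaming the corresponding copy in $S'$ contains $p'$ in both head and body. Such rules do augment the count of occurrences of $p'$ in bodies, but they are excluded from candidacy by the requirement $\symc{c_i'} \neq p'$, so the conclusion stands.
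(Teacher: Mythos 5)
Your proof is correct and follows essentially the same route as the paper's one-line argument: both hinge on enumerating the occurrences of the fresh symbol $p'$ in $S'$ and observing that $p'$ appears in the body of only one rule whose head is not $p'$. The paper discharges the stretching condition by noting the trace of $p'$ is therefore unique, whereas you discharge it by noting that two distinct witness rules cannot exist; these are interchangeable, and your explicit treatment of the recursive copies is a welcome bit of extra care.
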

\begin{proof}
$p'$ only occurs once and thus has a unique trace. 
\end{proof}
\begin{lemma}
\label{lem:step2}
Let $S$  be semi-polar such that $p\in  \mathit{stret}(S)$. Then there $\exists n>0$ such that  $S\rightharpoondown_p S_1\rightharpoondown_p\cdots \rightharpoondown_p S_n$ and $p\not \in  \mathit{stret}(S_n)$.
\end{lemma}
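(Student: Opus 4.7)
\textbf{Proof plan for Lemma~\ref{lem:step2}.} The approach mirrors that of Lemma~\ref{lem:step}: I will define a natural-number measure on the program that strictly decreases with each $p$-stretching step, so the rewrite sequence must terminate, and at termination no further $p$-stretching is possible.

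First, I define the measure
\[
\mu_p(S) = \bigl|\{(c,i) : c \in S,\ \symc{c} \neq p,\ \text{the }i\text{-th body literal of }c\text{ has symbol }p\}\bigr|,
\]
that is, the number of body occurrences of $p$ in rules whose head is not $p$. Note that $\mu_p(S) \geq 0$ always, and stretching requires at least two rules $c_1, c_2$ with $\symc{c_1} \neq p$ and $\symc{c_2} \neq p$ both containing $p$ in their body, so whenever $p \in \mathit{stret}(S)$ we have $\mu_p(S) \geq 2$.

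Next I verify that $\mu_p(S) > \mu_p(S')$ whenever $S \rightharpoondown_p S'$. Writing $S' = (S \setminus \{c_1\}) \cup \{c_1\}[p\setminus p'] \cup D$ where $D = \{c \mid \symc{c}=p,\, c \in S\}[p\setminus p']$, I examine the three parts in turn. The rule $c_1[p\setminus p']$ contributes zero to $\mu_p$ because every $p$ in it has been renamed to the fresh symbol $p'$, whereas $c_1$ itself contributed at least one (the occurrence we used to justify stretching). Every rule in $D$ has head $p'$ (not $p$), but crucially every $p$ appearing in those rules has also been renamed to $p'$ by the substitution $[p\setminus p']$, so $D$ contributes zero to $\mu_p(S')$. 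All other rules of $S$ are untouched and contribute identically to both sides. Hence $\mu_p(S') = \mu_p(S) - k$ for some $k \geq 1$.

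Since $\mu_p$ is a non-negative integer and strictly decreases along any $p$-stretching sequence $S \rightharpoondown_p S_1 \rightharpoondown_p \cdots$, the sequence must be finite; let $S_n$ be its last element. By maximality of the sequence, there are no two rules in $S_n$ witnessing the stretching condition for $p$, i.e.\ $p \notin \mathit{stret}(S_n)$, which yields $n > 0$ as required (since we assumed $p \in \mathit{stret}(S)$, so at least one step occurs).

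The main obstacle is the bookkeeping in the second paragraph: one must be careful that duplicating the defining rules of $p$ does not secretly reintroduce body occurrences of $p$ in rules with head different from $p$. This is ruled out by the fact that the duplicated rules all carry head $p'$ and the substitution $[p \setminus p']$ is applied uniformly to head \emph{and} body of those rules, so the fresh rules involve only $p'$ and not $p$. Once this is checked, the termination argument is immediate by well-foundedness of $(\mathbb{N}, <)$.
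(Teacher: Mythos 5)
Your proof is correct and follows essentially the same route as the paper's: the paper also argues that each $p$-stretching step strictly reduces the number of body occurrences of $p$ in rules whose head symbol is not $p$, so the process terminates, and at termination no stretching step is applicable, i.e.\ $p \notin \mathit{stret}(S_n)$. Your version merely makes the measure explicit and verifies the bookkeeping (that the duplicated, renamed defining rules contribute nothing to the count), which the paper leaves implicit.
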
 
\begin{proof}
The process of  $p$-stretching  replaces one occurrence of $p\in \bodyv{c}{\even}$ for $c\in S_i$, such that $\symc{c}\not= p$, by a fresh symbol thus $S_{i+1}$ has one less occurrence of $p$. After finitely many steps $n$, every occurrence of $p\in \body{c}$ for $c\in S_n$  will have the same trace.
\end{proof}
We say $S>_s S'$ if there exists $p\in \mathit{stret}(S)$ such that  $S\rightharpoondown_p S'$. The transitive-reflexive closure is denoted by $\geq_s^*$.

\begin{lemma}
\label{lem:semipol}
Let $S$ be semi-polar. Then there exists  $S'$ such that $S \geq_s^* S'$ and  $\mathit{stret}(S') = \emptyset$. 
\end{lemma}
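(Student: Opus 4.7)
The plan is to mirror the proof of Lemma~\ref{lem:flatpol} and proceed by induction on the cardinality of $\mathit{stret}(S)$. The base case $\mathit{stret}(S)=\emptyset$ is immediate: take $S'=S$. For the inductive step, pick any $p\in\mathit{stret}(S)$ and apply Lemma~\ref{lem:step2} to obtain a finite chain $S\rightharpoondown_p S_1\rightharpoondown_p\cdots\rightharpoondown_p S_n$ with $p\notin\mathit{stret}(S_n)$. If I can show $|\mathit{stret}(S_n)|<|\mathit{stret}(S)|$, the induction hypothesis applied to $S_n$ yields some $S'$ with $S_n\geq_s^* S'$ and $\mathit{stret}(S')=\emptyset$, and the transitivity of $\geq_s^*$ then gives $S\geq_s^* S'$.

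The reduction $\mathit{stret}(S_n)\subsetneq\mathit{stret}(S)$ splits into two subclaims. First, every fresh symbol $p'\in\symT{S_n}\setminus\symT{S}$ introduced along the chain is not stretchable in $S_n$; this is precisely the lemma stated immediately before Lemma~\ref{lem:step2}, since each $p'$ occurs uniquely in $S_n$ and hence admits a single trace. Second, no symbol $q\in H(S)\setminus\mathit{stret}(S)$ that was previously non-stretchable becomes stretchable in $S_n$. For this, observe that $p$-stretching only modifies $c_1$'s body (replacing the literal $p$ with $p'$) and augments the program with renamed duplicates of the rules whose head is $p$ (copies with head $p'$). The rules defining any $q\neq p,p'$ are untouched, and the positive/negative classification of every body occurrence of $q$ in every rule is preserved. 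Hence the collection of traces $\{\mathit{tr}_{S_n}(c,q)\mid q\in\body{c}\}$ is a refinement of the original one: renaming one source of a trace to a fresh symbol can only split a trace class, never introduce a mismatch among the traces of $q$ itself. Consequently, any $q$ with a uniform trace in $S$ still has a uniform trace in $S_n$.

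The main obstacle is rigorously formalising the second subclaim: traces are defined recursively through the call graph, and $p$-stretching perturbs that graph by splitting one node into two. One must carefully argue that trace propagation through an unchanged rule body yields the same $\parOp$-composition as before — in particular, that renaming $p$ to $p'$ inside the single rule $c_1$ cannot cause a previously uniform trace of some other symbol $q$ to bifurcate, even when $q$ is reached via a rule whose head $p$ has been duplicated as $p'$. The clean way to handle this is to prove, by induction on the depth of the trace recursion, that for every $c\in S\cap S_n$ and every $q\in\body{c}\setminus\{p\}$ the value $\mathit{tr}_{S_n}(c,q)$ equals $\mathit{tr}_S(c,q)$ (or differs only by the presence of the twin rule for $p'$, which contributes the same $\parOp$-value as its original). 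Once this monotonicity of traces under stretching is in hand, the remainder is routine well-founded bookkeeping that closely parallels the proof of Lemma~\ref{lem:flatpol}.
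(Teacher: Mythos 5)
Your proof follows essentially the same route as the paper's: induction on the cardinality of $\mathit{stret}(S)$ with a trivial base case and a step case supplied by Lemma~\ref{lem:step2}. You are in fact more careful than the paper, whose two-sentence proof silently assumes the strict decrease $|\mathit{stret}(S_n)|<|\mathit{stret}(S)|$ that your second subclaim (no previously non-stretchable symbol becomes stretchable under $p$-stretching) makes explicit.
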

\begin{proof}
Follows from induction on the size of  $\mathit{stret}(S)$. The basecase is trivial, and the stepcase follows from Lemma~\ref{lem:step2}.
 \end{proof}

 \begin{lemma}
Let $S,S'$ be semi-polar programs such that $S \geq_s^* S'$ and  $\mathit{stret}(S') = \emptyset$. Then $S'$ is polar. 
\end{lemma}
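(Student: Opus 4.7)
The plan is to fix an arbitrary $p \in H(S')$ and argue that $p$ receives a unique polarity under Definition~\ref{def:polarity}; by Definition~\ref{def:polarProg}, showing this for every $p$ establishes that $S'$ is polar. If $p \in top(S')$, Definition~\ref{def:topset} guarantees that $p$ appears in no negated literal and that any rule whose body contains $p$ has its head in $top(S')$ as well; hence only clause~(1) of Definition~\ref{def:polarity} can ever fire for occurrences of $p$, giving $pos(p)$ exclusively. The interesting case is $p \in defs(S')$.

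For such a $p$ and any rule $r$ with $p \in \body{r}$, Definition~\ref{def:polarity} assigns to $p$ the polarity $\pi(\symc{r}) \parOp \sigma_r(p)$, where $\sigma_r(p) = \even$ if $p \in \bodyp{r}$, $\sigma_r(p) = \odd$ if $p \in \bodyn{r}$, and $\pi$ denotes the polarity of the rule's head. I would first establish, by induction on the length of a dependency chain from $r$ up to a rule whose head lies in $top(S')$, that iteratively unfolding this formula yields exactly $\mathit{tr}_{S'}(r,p)$; the chain terminates because $S'$ is stratified. The inductive step matches the four clauses of $\mathit{tr}$ against the four clauses of Definition~\ref{def:polarity}, with the base case handled by the fact that top-symbol heads have polarity $\even$ and that the trace reduces to $\sigma_r(p)$ in that case.

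It then remains to show that all polarity assignments for $p$ coincide. For two non-recursive body occurrences $p \in \body{r_1}$ and $p \in \body{r_2}$ (with $\symc{r_i} \neq p$), if $\mathit{tr}_{S'}(r_1,p) \neq \mathit{tr}_{S'}(r_2,p)$, then $p$ would witness a $p$-stretching step applicable to $S'$, contradicting $\mathit{stret}(S') = \emptyset$; so all non-recursive occurrences yield identical polarities. For a recursive occurrence $p = \symc{r}$ with $p \in \body{r}$, stratification of $S'$ forbids $p \in \bodyn{r}$ (it would demand $\mu(p) > \mu(p)$), so $\sigma_r(p) = \even$ and the polarity assignment collapses to $\pi(p) = \pi(p)$, a tautology imposing no conflict. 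Combining the two sub-cases, $p$ carries exclusively one polarity, and since $p \in H(S')$ was arbitrary, $S'$ is polar.

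The hard part will be the inductive correspondence between the rule-local polarity propagation of Definition~\ref{def:polarity} and the globally recursive $\mathit{tr}_{S'}$: the trace definition's four-way case split (including the fourth ``otherwise'' clause returning $\even$) requires careful bookkeeping to align with the polarity clauses, particularly for symbols that occur only recursively and for the interaction of clause~(3) of $\mathit{tr}$ (where a recursive head inherits its trace from a non-recursive occurrence) with the polarity propagation. A secondary subtlety is verifying that the induction really terminates: stratification provides well-foundedness via the $\mu$-ordering, but I would need to check that the particular dependency path taken by the trace respects this ordering even when clause~(3) detours through another rule.
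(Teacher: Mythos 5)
Your proof is correct and follows essentially the same route as the paper, whose entire argument is the one-line assertion that $ann(p) = \mathit{tr}_{S'}(c,p)$ for $p\in\body{c}$ is ``easy to verify''; you supply the inductive correspondence between Definition~\ref{def:polarity} and the trace, and then invoke $\mathit{stret}(S')=\emptyset$ for uniqueness, exactly as the paper intends. Your explicit treatment of recursive occurrences via stratification and of the top-symbol case is a welcome elaboration of details the paper omits, not a deviation.
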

\begin{proof} It is easy to verify that $ann(p) =  \mathit{tr}_{S'}(c,p)$ where $p\in\body{c}$ and $p\in H(S')$.
 \end{proof}
Let $S$ be a stratified normal program, $S'$ a semi-polar program such that $S \geq_f^* S'$  and $S''$ a polar program such that  $S' \geq_s^* S''$. Then we refer to $S''$ as the polarisation of $S$, denoted $\mathit{pol}(S)$. 

 \begin{theorem}
Let $S$ be a stratified normal program and $p\in H(S)$. Then $S\models \forall \vec{x}.p(\vec{x})$ iff $\mathit{pol}(S)\models \forall \vec{x}.p(\vec{x})$.
\end{theorem}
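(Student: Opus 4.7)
The plan is to decompose the transformation $S \mapsto \mathit{pol}(S)$ into its constituent flattening and stretching steps and show that each individual step preserves the entailment of every ground atom built from a predicate already in $H(S)$. By Lemmas~\ref{lem:flatpol} and~\ref{lem:semipol}, there is a chain $S = S_0 >_f \cdots >_f S_k >_s \cdots >_s S_n = \mathit{pol}(S)$, so it suffices to prove the preservation claim for a single elementary step; the theorem then follows by induction along this chain.

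The key single-step lemma I would prove is: whenever $S \rightharpoonup_p S'$ or $S \rightharpoondown_p S'$ with fresh symbol $p'$, the program $S'$ is again stratified, and for every $q \in H(S)$ and every ground tuple $\vec{t}$ we have $S \models q(\vec{t})$ iff $S' \models q(\vec{t})$, and moreover $S \models p(\vec{t})$ iff $S' \models p'(\vec{t})$. The essential observation is that the rules in $S'$ with head $p'$ are exact textual copies (via $[p \setminus p']$) of the rules in $S$ with head $p$, and since $p'$ is fresh, neither $p$ nor $p'$ appears in the body of a rule defining the other. Consequently $p$ and $p'$ satisfy identical recursive equations in $S'$, forcing $S' \models p(\vec{t})$ iff $S' \models p'(\vec{t})$. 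Stratification is preserved by extending the level function $\mu$ of $S$ via $\mu'(p') := \mu(p)$, since the transformation only renames body occurrences of $p$ and duplicates its defining rules, introducing no new negative dependency that did not already occur for $p$.

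Given this equivalence, semantic preservation is then verified by induction on the stratum level of $q$: strata strictly below $\mu(p)$ are untouched; $p$ and $p'$ receive the same truth values at level $\mu(p)$ by the argument above; and at higher strata the only structural change is in the single rule $c_1$, where one occurrence of $p$ in its body has been replaced by $p'$. Because the extensions of $p$ and $p'$ agree pointwise, the reduct of the modified $c_1$ in $S'$ produces exactly the same ground conclusions as the reduct of the original $c_1$ in $S$, so $\symc{c_1}$ derives the same atoms. Iterating along the chain yields $S \models q(\vec{t})$ iff $\mathit{pol}(S) \models q(\vec{t})$ for every $q \in H(S)$, and universal closure delivers the theorem.

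The main obstacle I anticipate is the stretching case in which the replaced occurrence of $p$ sits inside a \emph{negated} body literal of $c_1$: the contribution of $c_1$ to the stratified model depends on the absence of $p$, so one must argue that when evaluating the reduct at the stratum of $\symc{c_1}$ the extensions of $p$ and $p'$ have already stabilised and coincide. This reduces to checking that $\mu'(p') = \mu(p) < \mu(\symc{c_1})$ (which follows from Definition~\ref{def:strat} applied to the original negated occurrence) so that both extensions are fixed before higher strata are evaluated, and that the copy of the defining rules for $p$ is purely textual, creating no accidental dependency between $p$ and $p'$. Once this subtlety is handled uniformly for both flattening and stretching, the remaining inductive bookkeeping is routine.
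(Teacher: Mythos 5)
Your proposal is correct and follows essentially the same route as the paper, whose proof is only a two-sentence sketch stating that flattening and stretching merely introduce fresh symbols duplicating existing predicate definitions (so $\mathit{pol}(S)$ is $S$ plus renamed copies). You supply the details the paper omits — the single-step preservation lemma, the extension of the stratification function to the fresh symbol, and the handling of negated occurrences via stratum levels — all of which are consistent with the paper's intended argument.
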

\begin{proof}
This follows from the fact that flattening and stretching only introduces fresh symbols that duplicate existing predicate definitions. Thus, $\mathit{pol}(S)$ contains many repetitions, modulo renaming, of the predicate definitions contained in  $S$.
\end{proof}
The formulation in the main body of the paper follows from the above theorem as $top(S)\subseteq H(S)$ and $P=\mathit{pol}(S)$.

\begin{theorem}
Let $S$ be a stratified logic program. Then there exists a polar program $P$ such that for all $p\in top(S)$, $S\models \forall \vec{x}.p(\vec{x})$ iff $P\models \forall \vec{x}.p(\vec{x})$.
\end{theorem}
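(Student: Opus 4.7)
The plan is to prove the theorem constructively by exhibiting an algorithm that converts any stratified program $S$ into a polar program $P$ while preserving the models of the top-predicate symbols. The fundamental obstacle is that a stratified program may reference a single predicate $q$ along several call-chains whose accumulated negation parities differ (or equivalently: $q$ may occur both positively and negatively in the body of rules whose head symbols themselves have different polarities). Polarity, as introduced in Definition~\ref{def:polarity}, forces each head symbol to carry a unique parity, so the transformation must resolve these conflicts. My idea is to resolve them by \emph{cloning}: whenever an occurrence of $q$ would demand a polarity incompatible with another occurrence, replace that occurrence by a fresh symbol $q'$ and copy every rule defining $q$ so that it also defines $q'$. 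Semantically this is free, because in every (stable / well-founded) model $q$ and $q'$ will have identical extensions.

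I would organise the construction in two phases. \emph{Phase 1 (flattening)}: for any predicate $q$ that appears both in a positive body literal and in a negated body literal somewhere in $S$ (and does not coincide with its own defining head), pick one such positive occurrence, introduce a fresh $q'$, replace that occurrence by $q'$, and duplicate the rules defining $q$ with head renamed to $q'$. This strictly reduces the number of positive body occurrences of $q$ that co-exist with negated occurrences; iterating yields a program in which every predicate symbol is used either only positively or only negatively within each rule body. \emph{Phase 2 (stretching)}: even after flattening, a single symbol may still be reached from the top with different cumulative parities along different rule-chains (because an intermediate head may itself be used negatively further up). I would define, for each occurrence of a symbol in a body, the \emph{trace} obtained by composing the polarities along the path from that occurrence to a top symbol. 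Whenever two occurrences of the same $q$ yield different traces, clone again as in Phase~1. Each clone strictly decreases the number of mismatched traces, so the process terminates, and the result has a globally consistent polarity for every predicate.

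The two termination arguments are the essential technical pieces. For each, I would set up an induction on the number of symbols currently violating the desired invariant ($\mathit{flat}(S)$ for Phase~1, $\mathit{stret}(S)$ for Phase~2); the key lemma for the inductive step is that a single clone does not reintroduce the problem for the fresh symbol (since the fresh symbol has only one occurrence by construction) and strictly decreases the count for the original symbol. To avoid pathological behaviour with recursion, I would, as the paper does, forbid stretching when one of the conflicting occurrences lies in the head's own recursive rule; trace values for such recursive occurrences are pinned by the outer context.

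Finally, I would establish semantic preservation. Because every fresh symbol $q'$ is introduced together with an exact copy (modulo renaming) of the rules defining $q$, by an easy induction on stratification rank $\mu$ every model of the original $S$ extends uniquely to a model of the transformed program in which $q$ and its clones have identical extensions, and conversely the restriction of any model of the transformed program to the original vocabulary is a model of $S$. In particular, for $p \in top(S)$, $S \models \forall \vec x.\,p(\vec x)$ iff the same holds in $P = \mathit{pol}(S)$. The hard part I anticipate is not the cloning itself, which is combinatorially clean, but carefully setting up the trace definition and checking that the polarity assignment induced on $P$ genuinely satisfies all four clauses of Definition~\ref{def:polarity} simultaneously; this is where the separation into the two phases pays off, since flattening guarantees that every symbol's trace is well-defined and stretching guarantees it is unique.
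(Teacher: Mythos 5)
Your proposal matches the paper's own proof essentially step for step: the same two-phase decomposition into flattening (resolving mixed positive/negated body occurrences by cloning definitions under fresh names) and stretching (resolving conflicting traces), the same termination arguments by induction on the number of offending symbols with the key observation that fresh symbols cannot re-offend, the same exclusion of recursive occurrences from stretching, and the same semantic-preservation argument that clones merely duplicate existing definitions modulo renaming. No substantive differences to report.
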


\section{Problems Table~\ref{tab:q2times}: Solutions}
\label{sec:learnedproblems}
Here we provide found solutions for the problems found in Table~\ref{tab:q2times}. We 
include the solutions found by \name{}, \nameb{}, \textbf{Aleph}, and \textbf{Metagol}$_{sn}$. 
Note \textbf{Metagol}$_{sn}$ only found solutions for Stepwise-narrowing problems.
\subsection{Basic}
 \begin{itemize}



\item \textbf{B1}:\underline{Divides Entire List} is there a number in the list A which divides every number in the list
    \begin{center}
        \textbf{\name{} \& \nameb{}}
    \end{center}
    \begin{verbatim}
divlist(A):- member(B,A), not inv1(B,A).
 inv1(A,B):- member(C,B), not my_div(A,B).
    \end{verbatim}
       \begin{center}
        \textbf{\popper}
    \end{center}
    \begin{verbatim}
divlist(A):- head(A,C),tail(A,B),
member(D,B), my_div(C,D).
    \end{verbatim}

    \begin{center}
        \textbf{Aleph}
    \end{center}
\begin{verbatim}
 divlist([11,33,44,121]).
 divlist([6,9,18,3,27]).
 divlist([2,4,6,8,10,12,14]).
    \end{verbatim}
\end{itemize}
\subsection{Step-Wise Narrowing Tasks}
\begin{itemize}
    \item \textbf{B2}: \underline{1 of 2 even} is one of the numbers A or B even
   
    \begin{center}
        \textbf{\name{} \& \nameb{}}
    \end{center}
    \begin{verbatim}
 one_even(A,B) :- even(A), not inv1(B,A).
 one_even(A,B) :- even(B), not inv1(B,A).
     inv1(A,B) :- even(B),even(A).
    \end{verbatim}

    \begin{center}
        \textbf{Aleph}
    \end{center}
\begin{verbatim}
one_even(3,18). one_even(2,5).
one_even(3,4).  one_even(2,3).
one_even(3,2).  one_even(1,4).
one_even(1,2).
    \end{verbatim}
\begin{center}
        \textbf{Metagol}$_{sn}$
    \end{center}
\begin{verbatim}
  one_even(A,B) :-  even(A), not _one_even(A,B).
  one_even(A,B) :-  even(B), not __one_even(A,B).
 _one_even(A,B) :- even(B).
__one_even(A,B) :- even(A).
\end{verbatim}
    \item \textbf{B3}: \underline{Leapyear} is A a leap year
    \begin{center}
        \textbf{\name{} \& \nameb{}}
    \end{center}
    \begin{verbatim}
leapyear(A):- divisible4(A), not inv1(A).
inv1(A):-  divisible100(A), not inv2(A).
inv2(A):- divisible400(A).
    \end{verbatim}

    \begin{center}
        \textbf{Aleph}
    \end{center}
\begin{verbatim}
leapyear(996). leapyear(988).  leapyear(984).  
leapyear(980).  leapyear(972).  leapyear(968).  
leapyear(964).  leapyear(956).  leapyear(952).  
leapyear(948).  leapyear(940).  leapyear(936).  
leapyear(932).  leapyear(924).  leapyear(920).  
leapyear(916).  leapyear(908).  leapyear(904). 
... '341 more positive instances'
leapyear(A):-divisible16(A). 
leapyear(1004).
\end{verbatim}
\begin{center}
        \textbf{Metagol}$_{sn}$
    \end{center}
\begin{verbatim}
  leapyear(A) :- div4(A), not _leapyear(A).
 _leapyear(A) :- div100(A), not __leapyear(A)).
__leapyear(A) :- div400(A).
\end{verbatim}
\end{itemize}
\subsection{Zendo}
\begin{itemize}
  \item \textbf{Z1}: \underline{Nothing is upright}  none of the cones in the scene A are upright.
   \begin{center}
        \textbf{\name{} \& \nameb{}}
    \end{center}
    \begin{verbatim}
 zendo(A) :- not inv1(A).
  inv1(A) :- piece(A,B),upright(B).
    \end{verbatim}

    \begin{center}
        \textbf{Aleph}
    \end{center}
\begin{verbatim}
zendo(A):-piece(A,B),size(B,C),small(C),
blue(B),not upright(B).

zendo(D):-piece(D,E),lhs(E),blue(E),
piece(D,F),green(F).
    \end{verbatim}
    
    \item \textbf{Z2}: \underline{All red cones} are all cones in the scene A red
   
   \begin{center}
        \textbf{\name{} \& \name$_{bn}$}
    \end{center}
    \begin{verbatim}
zendo(A) :- not inv1(A).
 inv1(A) :- piece(A,B), not red(B).
    \end{verbatim}
   \begin{center}
        \textbf{\popper}
    \end{center}
    \begin{verbatim}
zendo(A):- piece(A,B),contact(B,C),
red(C), rhs(C).

zendo(A):- piece(A,B),contact(B,C),
upright(C), lhs(B).

zendo(A):- piece(A,B),contact(B,C),
lhs(C), lhs(B).

zendo(A):- piece(A,B),coord1(B,C),
size(B,C), upright(B).
    \end{verbatim}

    \begin{center}
        \textbf{Aleph}
    \end{center}
\begin{verbatim}
 zendo(10).  zendo(4).
 zendo(15).  zendo(14).  zendo(12).  
 zendo(A):-piece(A,B),coord1(B,C),
 coord2(B,C),large(C),red(B).  
 
 zendo(D):-piece(D,E),coord1(E,F),
 coord2(E,F), red(E),lhs(E).    
 
 zendo(G):-piece(G,H),coord1(H,I),large(I),
 size(H,J),small(J),rhs(H).  

 zendo(K):-piece(K,L),coord2(L,M),size(L,M),
 upright(L),not blue(L).  
 
 zendo(N):-piece(N,O),contact(O,P),red(P),
 rhs(P).  
 
 zendo(Q):-piece(Q,R),coord2(R,S),size(R,S),
 red(R),strange(R).
 
    \end{verbatim}
  
    \item \textbf{Z3}: \underline{All same size} are all the cones in the scene A are the same size
    
   \begin{center}
        \textbf{\name{} \& \nameb{}}
    \end{center}
    \begin{verbatim}
 zendo(A) :- piece(A,C),size(C,B),
 not inv1(A,B).

inv1(A,B) :- piece(A,C),not size(C,B).
    \end{verbatim}

    \begin{center}
        \textbf{Aleph}
    \end{center}
\begin{verbatim}
zendo(A):-piece(A,B),red(B),lhs(B),
piece(A,C),green(C),strange(C).

zendo(D):-piece(D,E),upright(E),blue(E),
piece(D,F),green(F),rhs(F).

zendo(G):-piece(G,H),size(H,I),strange(H),
piece(G,J),size(J,I),rhs(J).

zendo(K):-piece(K,L),coord2(L,M),medium(M),
piece(K,N),red(N),upright(N).

\end{verbatim}
    \item \textbf{Z4}: \underline{Exactly a blue} is there exactly 1 blue cone in the scene A
   \begin{center}
        \textbf{\name{} \& \nameb{}}
    \end{center}
    \begin{verbatim}
   zendo(A) :- piece(A,B),not inv1(A,B),blue(B).
  inv1(A,B) :- piece(A,C),blue(C),not eq(B,C).
    \end{verbatim}
       \begin{center}

     \textbf{\popper}
    \end{center}
    \begin{verbatim}
  zendo(A):- piece(A,B),contact(B,C),
  strange(C).

zendo(A):- piece(A,B),contact(B,C),
not green(C).
    \end{verbatim}

    \begin{center}
        \textbf{Aleph}
    \end{center}
\begin{verbatim}
zendo(A) :- piece(A,B),coord1(B,C),large(C),
coord2(B,D),small(D),upright(B).

zendo(E) :- piece(E,F),coord1(F,G),rhs(F),
piece(E,H),coord2(H,G),red(H).
\end{verbatim}
    \item \textbf{Z5}: \underline{All blue or small} are all the cones in the scene A blue or small
   \begin{center}
        \textbf{\name{} \& \nameb{}}
    \end{center}
    \begin{verbatim}
zendo(A):- not inv1(A).
inv1(A) :- piece(A,B),not inv2(B).
inv2(A) :- blue(A).
inv2(A) :- size(A,B),small(B).
    \end{verbatim}

    \begin{center}
        \textbf{Aleph}
    \end{center}
\begin{verbatim}
zendo(27).  zendo(10).

zendo(A):-piece(A,B),coord2(B,C),medium(C),
size(B,D),small(D),strange(B).

zendo(E):-piece(E,F),contact(F,G),
size(G,H),small(H),strange(F).

zendo(I):-piece(I,J),coord1(J,K),size(J,K),
small(K).

zendo(L):-piece(L,M),coord2(M,N),
not small(N), size(M,O),small(O),rhs(M).
\end{verbatim}
    \item \textbf{Z6}: \underline{Largest is red} Is the largest cone in the scene red
   \begin{center}
        \textbf{\name{} \& \nameb{}}
    \end{center}
    \begin{verbatim}
zendo(A) :- piece(A,B),not inv1(B,A).

inv1(A,B) :- piece(B,C),size(C,D),
not inv2(D,A).

inv2(A,B) :- size(B,C),red(B),leq(A,C).
    \end{verbatim}

    \begin{center}
        \textbf{Aleph}
    \end{center}
\begin{verbatim}
zendo(29). zendo(28).  zendo(25).  
zendo(18). zendo(13).  zendo(11).  
zendo(9).  zendo(7). zendo(3). 

zendo(A) :- piece(A,B),coord2(B,C),
medium(C),rhs(B),blue(B).    

zendo(D) :- piece(D,E),coord2(E,F),large(F),
size(E,G),medium(G),upright(E).  

zendo(H) :- piece(H,I),contact(I,J),
coord2(J,K), upright(J),coord2(I,K),
not size(I,K).  

zendo(L) :- piece(L,M),coord1(M,N),lhs(M),
piece(L,O),coord2(O,N),strange(O).   

zendo(P) :- piece(P,Q),coord2(Q,R),small(R),
size(Q,S),medium(S),lhs(Q).  

zendo(T) :- piece(T,U),coord2(U,V),large(V),
strange(U),piece(T,W),contact(W,X).
\end{verbatim}
\end{itemize}
\subsection{Sets}
\begin{itemize}
  \item \textbf{S1}: \underline{Subset} The set B is a subset of the set A.
  \begin{center}
        \textbf{\name{} \& \nameb{}}
    \end{center}
    \begin{verbatim}
    subset(A,B) :- not inv1(A,B).
      inv1(A,B) :- member(C,B),not member(C,A).
    \end{verbatim}

    \begin{center}
        \textbf{Aleph}
    \end{center}
\begin{verbatim}
subset([x,s,y,z],[z,s,x]).
subset([x,s,y,z],[y,x]).
subset([x,y,z],[]). 
subset([x,y,z],[z,y]). 
subset([x,y,z],[x,z]). 
subset([x,y,z],[x,y]).
\end{verbatim}
    \item \textbf{S2}: \underline{Distinct} The set $A$ is distinct from the set $B$ 
   \begin{center}
        \textbf{\name{} \& \nameb{}}
    \end{center}
    \begin{verbatim}
distinct(A,B) :- not inv1(A,B).
    inv1(A,B) :- member(C,A),member(C,B).
    \end{verbatim}

    \begin{center}
        \textbf{Aleph}
    \end{center}
\begin{verbatim}
distinct([1,6,3,9,13,14,15,2],[10,17,11,8]).
distinct([x,s,y,z],[w,r,k,e]).
distinct([x,y,z],[w,r,e]).
\end{verbatim}
   \item \textbf{S3}: \underline{Set Difference} The set $Z$ is the difference of the sets $X$ and $Y$.
  \begin{center}
        \textbf{\name{} \& \nameb{}}
    \end{center}
    \begin{verbatim}
setdiff(A,B,C):- not inv1(C,A),not inv1(B,C).

inv1(A,B):- member(C,B),member(D,A),
not member(D,B),member(C,A).
    \end{verbatim}

    \begin{center}
        \textbf{Aleph}
    \end{center}
\begin{verbatim}
setdiff([x,y,z],[x,y,z],[]).
setdiff([r,w,y,k],[x,r,y,z,w],[k]).
setdiff([x,r,y,z,w],[r,w,y,k],[x,z]).
setdiff(A,B,A).
\end{verbatim}
  \item \textbf{S4}: \underline{Sym. Difference} The set $Z$ is the symmetric difference of $X$ and $Y$.
  \begin{center}
        \textbf{\name{} \& \nameb{}}
    \end{center}
    \begin{verbatim}
symmetricdiff(A,B,C):- my_union(A,C,B),
not inv1(B,C,A).

inv1(A,B,C):- member(D,A),member(D,B),
member(D,C).
    \end{verbatim}

    \begin{center}
        \textbf{Aleph}
    \end{center}
\begin{verbatim}
symmetricdiff([r,x,w,y],[x,y],[r,w]).
symmetricdiff([x,y],[r,x,w,y],[r,w]).
symmetricdiff([x,y,s,k],[x,w,y,r],[s,k,w,r]).
symmetricdiff([x,y],[z,w],[x,y,z,w]).
\end{verbatim}
    \item \textbf{S5}: \underline{Subset Decom.} $Y$ is a decomposition of $X$ into subsets of $X$.
  \begin{center}
        \textbf{\name{} \& \nameb{}}
    \end{center}
    \begin{verbatim}
subsetdecom(A,B):- not inv1(A,B).

inv1(A,B):- member(C,A),
missing_from_bucket(B,C).

inv1(A,B):- member_2(C,B),member(D,C), 
not member(D,A).
    \end{verbatim}
  \begin{center}
        \textbf{\popper{}}
    \end{center}
    \begin{verbatim}
subsetdecom(A,B):- member_2(A,B).
    \end{verbatim}

    \begin{center}
        \textbf{Aleph}
    \end{center}
\begin{verbatim}
subsetdecom([x,y,z,w,r,k,s],
[[w,x,z],[y,r],[k,s]]).

subsetdecom([x,y,z,w],[[w,x,z],[y]]).
subsetdecom([x,y,z,w],[[x,y,z,w]]).
subsetdecom([x,y,z,w],[[x,y],[z,w]]).
\end{verbatim}
   
        \item  \textbf{S6}: \underline{Mutual distinct} The sets contained in the set $X$ are mutual distinct
  \begin{center}
        \textbf{\name{} \& \nameb{}}
    \end{center}
    \begin{verbatim}
 mutualdistinct(A) :- not inv1(A).
inv1(A) :- member(B,A),member(C,A),not inv2(B,C).
inv2(A,B) :- eq(A,B).
inv2(A,B) :- distinct(A,B).
    \end{verbatim}

    \begin{center}
        \textbf{Aleph}
    \end{center}
\begin{verbatim}
mutualdistinct([]).
mutualdistinct(A):-member(B,A),member(C,A),
distinct(C,B), member(D,A),distinct(D,C),
distinct(D,B).
\end{verbatim}

\end{itemize}

\subsection{Graph Problems}
\begin{itemize}
     \item \textbf{G1}: \underline{Independent Set} Is $B$ an independent set of $A$.
     \begin{center}
        \textbf{\name{} \& \nameb{}}
    \end{center}
    \begin{verbatim}
 independent(A,B):- not inv1(B,A).
 
 inv1(A,B):- member(C,A),edge(B,D,C),
 member(D,A).
    \end{verbatim}
    \begin{center}
        \textbf{Aleph}
    \end{center}
\begin{verbatim}
independent(d,[5,7,11,12]).
independent(c,[2,4,7,9]).
independent(b,[2,4,7,9]).
independent(a,[3,4,5,11]).
\end{verbatim}

    \item \textbf{G2}: \underline{Star Graph} Does $A$ contain a node with an edge to all other nodes of $A$ (a star).
     \begin{center}
        \textbf{\name{} \& \nameb{}}
    \end{center}
    \begin{verbatim}
starg(A):- node(A,B),not inv1(B,A).

inv1(A,B):- node(B,C),not edge(B,A,C),
not eq(A,C).
    \end{verbatim}
     \begin{center}
        \textbf{\popper}
    \end{center}
    \begin{verbatim}
    starg(A):- edge(A,B,C),edge(A,C,B).

    \end{verbatim}

    \begin{center}
        \textbf{Aleph}
    \end{center}
\begin{verbatim}
starg(b).
starg(a).
\end{verbatim}
     \item \textbf{G3}: \underline{Unconnected} The node $A$ does not have a path to $B$ in the graph.
     \begin{center}
        \textbf{\name{} \& \nameb{}}
    \end{center}
    \begin{verbatim}
unconnected(A,B) :- not inv1(B,A).
inv1(A,B):- edge(B,A).
inv1(A,B):- edge(B,C),inv1(C,A).
    \end{verbatim}
    \begin{center}
        \textbf{Aleph}
    \end{center}
\begin{verbatim}
unconnected(8,9).  unconnected(9,8). 
unconnected(7,8).  unconnected(6,8). 
unconnected(5,8).  unconnected(4,8).  
unconnected(3,8).  unconnected(2,8). 
unconnected(1,8).  unconnected(8,7).  
unconnected(8,6).  unconnected(8,5).  
unconnected(8,4).  unconnected(8,3).  
unconnected(8,2).  unconnected(8,1).
\end{verbatim}
       \item \textbf{G4}: \underline{Proper Subgraph} Is $B$ is a  proper subgraph of $A$.
        \begin{center}
        \textbf{\name{} \& \nameb{}}
    \end{center}
    \begin{verbatim}
propersubgraph(A,B) :- not inv1(A,B),
node(A,C), not node(B,C).
inv1(A,B) :- edge(B,D,C), not edge(A,D,C).
    \end{verbatim}
        \begin{center}
        \textbf{\popper}
    \end{center}
    \begin{verbatim}
    propersubgraph(A,B):- edge(B,C,D),
    not edge(A,D,C).
    \end{verbatim}

    \begin{center}
        \textbf{Aleph}
    \end{center}
\begin{verbatim}
propersubgraph(A,B):- edge(B,C,D),edge(A,C,E),
not node(B,E).
\end{verbatim}
    \item \textbf{G5}: \underline{red-green neighbor}  Every red node of $A$ has a green neighbor.
\begin{center}
        \textbf{\name{} \& \nameb{}}
    \end{center}
    \begin{verbatim}
redGreenNeighbor(A) :- not inv1(A).
inv1(A) :- red(A,B), not inv2(B,A).
inv2(A,B) :- green(B,C),edge(B,A,C).
    \end{verbatim}
    \begin{center}
        \textbf{Aleph}
    \end{center}
\begin{verbatim}
redGreenNeighbor(f). redGreenNeighbor(e).
redGreenNeighbor(d). redGreenNeighbor(c). 
redGreenNeighbor(b). redGreenNeighbor(a).
\end{verbatim}
        
     \item  \textbf{G6}: \underline{max node weight} $G$ is a graph with  weighted nodes and $U$ has the maximum weight.
\begin{center}
        \textbf{\name{} \& \nameb{}}
    \end{center}
    \begin{verbatim}
maxweightnode(A,B):- weight(A,B,C), 
not inv1(A,C).  

inv1(A,B):- node(A,C),not inv2(A,B,C).

inv2(A,B,C):- weight(A,C,D),leq(D,B).
    \end{verbatim}
    \begin{center}
        \textbf{Aleph}
    \end{center}
\begin{verbatim}
maxweightnode(b,6).
maxweightnode(a,8).
\end{verbatim}
   \item \textbf{G7}: \underline{dominating set} is $B$ a dominating set of $A$.
                \begin{center}
        \textbf{\name{} \& \nameb{}}
    \end{center}
    \begin{verbatim}
dominating(A,B):- not inv1(A,B).

inv2(A,B,C):- edge(A,B,D), member(C,D).

inv1(A,B):- node(A,C), not member(C,B), 
not inv2(A,C,B).
    \end{verbatim}
    \begin{center}
        \textbf{Aleph}
    \end{center}
\begin{verbatim}
dominating(b,[1,2,3,4]).
dominating(a,[1,2,10]).
\end{verbatim}
    \item \textbf{G8}: \underline{maximal independent set}  $B$ is a maximal independent set of $A$.
\begin{center}
        \textbf{\name{} \& \nameb{}}
    \end{center}
    \begin{verbatim}
max_independent(A,B):- not inv1(B,A).

inv2(A,B,C):- member(D,B),edge(C,A,D).

inv1(A,B):- node(B,C),not member(C,A), 
not inv2(C,A,B).
    \end{verbatim}
    \begin{center}
        \textbf{Aleph}
    \end{center}
\begin{verbatim}
max_independent(d,[1,2,6,10,14]).
max_independent(c,[6,7,9,13,14]).
max_independent(b,[5,6,7,8,10,12,13,14]).
max_independent(a,[2,4,7,9]).
\end{verbatim}
    \end{itemize}

\section{Details: Table~\ref{tab:q2times} Construction}
\label{app:tables}
 We run 125 trials and set the bias as follows: max variables 4, max rules 4, max body literals 4. The only exception to the bias settings is \textit{unconnected}. Constructing graphs that capture \textit{unconnected}  without capturing a simpler property is non-trivial. Due to the non-deterministic behaviour of \popper{}'s search mechanism, we commuted the likelihood that the mean (and median) of the trails of \name{} and \nameb{} differ.

 \begin{table}[ht]
\centering
\footnotesize
\begin{tabular}{@{}l|cccc}
\textbf{Task} & 
\textbf{\ale{}} & 
\textbf{\metagolsn{}} & 
\textbf{\name{}} &  
\textbf{\popper{}}\\
\midrule
graph1      & 1  $\pm$ 0  & 1  $\pm$ 0      & 100  $\pm$ 20  &  1  $\pm$ 0  \\
graph2      & 1  $\pm$ 0  & 270 $\pm$ 77    & 300   $\pm$ 0   &  0  $\pm$ 0  \\
graph3      & 3  $\pm$ 0  & 213  $\pm$ 113  & 112   $\pm$ 0   &  0  $\pm$ 0  \\
graph4      & 3  $\pm$ 0  & 180  $\pm$ 126  & 213   $\pm$ 0   &  0  $\pm$ 0  \\
graph5      & 4  $\pm$ 0  & 300  $\pm$ 0    & 300   $\pm$ 0   &  0  $\pm$ 0  \\
\midrule
imdb1   & 135  $\pm$ 64   & 300  $\pm$ 0   &   1   $\pm$ 0   &    1  $\pm$ 0  \\
imdb2   & 300  $\pm$ 0    & 300  $\pm$ 0   &   2   $\pm$ 0   &    2  $\pm$ 0  \\
imdb3   & 300  $\pm$ 0    & 300  $\pm$ 0   & 300   $\pm$ 0   &  300  $\pm$ 0  \\
\midrule

krk1    & 0  $\pm$ 0  & 300  $\pm$  0    &  44   $\pm$ 17  &  62  $\pm$ 30 \\ 
krk2    & 8  $\pm$ 3  & 300  $\pm$  0    & 300   $\pm$  0  & 300  $\pm$ 0  \\
krk3    & 1  $\pm$ 0  & 279  $\pm$  36   & 276   $\pm$ 62  & 300  $\pm$ 0   \\ 

\midrule
contains    &  55   $\pm$ 8  &   0  $\pm$ 0    & 300  $\pm$ 0  &  25  $\pm$ 3  \\
dropk       &   7   $\pm$ 3  &   0  $\pm$ 0    & 300  $\pm$ 0  &   2  $\pm$ 1  \\
droplast    & 300   $\pm$ 0  &   0  $\pm$ 0    & 300  $\pm$ 0  &   2  $\pm$ 0  \\ 
evens       &   2   $\pm$ 1  &   0  $\pm$ 0    & 300  $\pm$ 0  &   3  $\pm$ 0 \\
finddup     &   1   $\pm$ 0  & 183  $\pm$ 123  & 300  $\pm$ 0  &   9  $\pm$ 1  \\
last        &   2   $\pm$ 0  & 134  $\pm$ 119  & 300  $\pm$ 0  &   2  $\pm$ 1  \\
len         &   2   $\pm$ 0  & 206  $\pm$ 100  & 300  $\pm$ 0  &   4  $\pm$ 0  \\
reverse     &   0   $\pm$ 0  &   0  $\pm$ 0    & 300  $\pm$ 0  & 300  $\pm$ 0  \\
sorted      &   1   $\pm$ 1  &   0  $\pm$ 0    & 300  $\pm$ 0  &  21  $\pm$ 8  \\
sumlist     &   0   $\pm$ 0  & 210  $\pm$ 117  & 300  $\pm$ 0  & 300  $\pm$ 0  \\
\midrule
attrition          &   2  $\pm$ 0  &   0  $\pm$ 0   & 300  $\pm$ 0   & 300  $\pm$ 0 \\
buttons            &  75  $\pm$ 0  & 300  $\pm$ 0   &   8  $\pm$ 0   &   6  $\pm$ 0  \\
buttons-g          &   0  $\pm$ 0  &   0  $\pm$ 0   &   3  $\pm$ 0   &   2  $\pm$ 0 \\
coins              & 300  $\pm$ 2  &   0  $\pm$ 0   & 148  $\pm$ 6   & 116  $\pm$ 4  \\
coins-g            &   0  $\pm$ 3  &   0  $\pm$ 0   &   3  $\pm$ 0   &   2  $\pm$ 0 \\
centipede-g        &   1  $\pm$ 0  &   0  $\pm$ 0   &  47  $\pm$ 1   &  44  $\pm$ 0 \\
md                 &   2  $\pm$ 0  & 300  $\pm$ 0   &   4  $\pm$ 0   &   3  $\pm$ 0 \\
rps                &   4  $\pm$ 0  &   0  $\pm$ 0   & 197  $\pm$ 30  &  75  $\pm$ 6 \\
\midrule
trains1     &    3  $\pm$ 1  &    300  $\pm$ 0   &   300  $\pm$ 0   &    5  $\pm$ 0 \\
trains2     &    2  $\pm$ 0  &    300  $\pm$ 0   &   300  $\pm$ 1   &    5  $\pm$ 1 \\
trains3     &    7  $\pm$ 2  &    300  $\pm$ 0   &   300  $\pm$ 0   &   30  $\pm$ 0 \\
trains4     &   20  $\pm$ 8  &    300  $\pm$ 0   &   300  $\pm$ 30  &   25  $\pm$ 1 \\
\midrule
zendo1     &    4  $\pm$ 3  &   300  $\pm$ 0   &   300  $\pm$ 0   &    44  $\pm$ 20 \\
zendo2     &    6  $\pm$ 2  &   300  $\pm$ 0   &   300  $\pm$ 1   &    89  $\pm$ 5 \\
zendo3     &   10  $\pm$ 2  &   300  $\pm$ 0   &   300  $\pm$ 0   &    82  $\pm$ 8 \\
zendo4     &   10  $\pm$ 4  &   300  $\pm$ 0   &   300  $\pm$ 30  &    56  $\pm$ 8 \\
\end{tabular}
\caption{
Learning times average (10 runs). We round times to the nearest second and provide 99\% confidence intervals .
}
\label{tab:NegAlephMetalearn}
\end{table}

 \begin{table}[ht]
\centering
\footnotesize
\begin{tabular}{@{}l|cccc}
\textbf{Task} & 
\textbf{\ale{}} & 
\textbf{\metagolsn{}} & 
\textbf{\name{}} &  
\textbf{\popper{}}\\
\midrule
graph1      & 100  $\pm$ 0  & 100  $\pm$ 0    & 100  $\pm$ 20  &   100  $\pm$ 0  \\
graph2      & 93  $\pm$ 2   &  54 $\pm$  0    & 0   $\pm$ 0   &    100  $\pm$ 0  \\
graph3      & 100  $\pm$ 0  &  65  $\pm$ 0    & 100   $\pm$ 0   &  100 $\pm$ 0  \\
graph4      & 99  $\pm$ 2   &  70  $\pm$ 0    &  99   $\pm$ 0   &  100 $\pm$ 0  \\
graph5      & 99  $\pm$ 1   &  0  $\pm$ 0     &   0   $\pm$ 0   &  100  $\pm$ 0  \\
\midrule
imdb1   & 95  $\pm$ 25    & 0  $\pm$ 0   &   100   $\pm$ 0   &    100  $\pm$ 0  \\
imdb2   & 50  $\pm$ 0    & 0  $\pm$ 0   &   100   $\pm$ 0   &     100  $\pm$ 0  \\
imdb3   & 50  $\pm$ 0    & 0  $\pm$ 0   &     0   $\pm$ 0   &       0  $\pm$ 0  \\
\midrule

krk1    & 98  $\pm$ 3  & 0  $\pm$   0    &  98   $\pm$ 0  &  100  $\pm$ 0 \\ 
krk2    & 95  $\pm$ 2  & 0  $\pm$   0    &   0   $\pm$  0  &   0  $\pm$ 0  \\
krk3    & 95  $\pm$ 7  & 50  $\pm$  0    &   5   $\pm$ 13  &   0  $\pm$ 0   \\ 

\midrule
contains    &   51   $\pm$ 1  &   0  $\pm$ 0    & 0  $\pm$ 0  &  100  $\pm$ 3  \\
dropk       &   50   $\pm$ 0  &   0  $\pm$ 0    & 0  $\pm$ 0  &  100  $\pm$ 1  \\
droplast    &    0  $\pm$  0  &   0  $\pm$ 0    & 0  $\pm$ 0  &  100  $\pm$ 0  \\ 
evens       &   66   $\pm$ 17  &   0  $\pm$ 0   & 0  $\pm$ 0  &  100  $\pm$ 0 \\
finddup     &   50   $\pm$ 0  &   0  $\pm$ 0    & 0  $\pm$ 0  &   99  $\pm$ 1  \\
last        &   50   $\pm$ 0  &   0  $\pm$ 0    & 0  $\pm$ 0  &  100  $\pm$ 1  \\
len         &   50   $\pm$ 0  &   0  $\pm$ 0    & 0  $\pm$ 0  &  100  $\pm$ 0  \\
reverse     &   50   $\pm$ 0  &   0  $\pm$ 0    & 0  $\pm$ 0  &    0  $\pm$ 0  \\
sorted      &   74  $\pm$  7  &   0  $\pm$ 0    & 0  $\pm$ 0  &   98  $\pm$ 5  \\
sumlist     &   50   $\pm$ 0  &   0  $\pm$ 0    & 0  $\pm$ 0  &  100  $\pm$ 0  \\
\midrule
attrition        &  93  $\pm$ 0  &  93  $\pm$ 0   &   0  $\pm$ 0   &   0  $\pm$ 0 \\
buttons          &  87  $\pm$ 0  &   0  $\pm$ 0   & 100  $\pm$ 0   & 100  $\pm$ 0  \\
buttons-g        & 100  $\pm$ 0  &  50  $\pm$ 0   &  98  $\pm$ 1   & 100  $\pm$ 0 \\
coins            &   0  $\pm$ 0  &  17  $\pm$ 0   & 100  $\pm$ 0   & 100  $\pm$ 4  \\
coins-g          &  50  $\pm$ 0  &  50  $\pm$ 0   & 100  $\pm$ 0   & 100  $\pm$ 0 \\
centipede-g      &  98  $\pm$ 0  &   5  $\pm$ 0   & 100  $\pm$ 0   & 100  $\pm$ 0 \\
md               &  94  $\pm$ 0  &   0  $\pm$ 0   & 100  $\pm$ 0   & 100  $\pm$ 0 \\
rps              & 100  $\pm$ 0  &  19  $\pm$ 0   & 100  $\pm$ 0  &  100  $\pm$ 6 \\
\midrule
trains1     &   100  $\pm$ 0  &    0  $\pm$ 0   &   0  $\pm$ 0   &   100  $\pm$ 0 \\
trains2     &   100  $\pm$ 0  &    0  $\pm$ 0   &   0  $\pm$ 0   &    95  $\pm$ 0 \\
trains3     &   100  $\pm$ 0  &    0  $\pm$ 0   &   0  $\pm$ 0   &   100  $\pm$ 0 \\
trains4     &   100  $\pm$ 0  &    0  $\pm$ 0   &   0  $\pm$ 0  &    100  $\pm$ 0 \\
\midrule
zendo1     &   98  $\pm$ 0  &   0  $\pm$ 0   &   0  $\pm$ 0   &    98  $\pm$ 0 \\
zendo2     &   99  $\pm$ 0  &   0  $\pm$ 0   &   0  $\pm$ 0   &   100  $\pm$ 0 \\
zendo3     &  100  $\pm$ 0  &   0  $\pm$ 0   &   0  $\pm$ 0   &   100  $\pm$ 0 \\
zendo4     &   99  $\pm$ 1  &   0  $\pm$ 0   &   0  $\pm$ 0  &    100  $\pm$ 0 \\
\end{tabular}
\caption{
predictive accuracy average (10 runs). We round times to the nearest second and provide 99\% confidence intervals.
}
\label{tab:NegAlephMetaacc}
\end{table}

\begin{table}[ht]
\centering
\footnotesize
\begin{tabular}{@{}l|cc@{}}
\textbf{Task} & 
  \textbf{Sound constraints} & 
  \textbf{Unsound constraints}\\
\midrule
gen &  \textbf{100  $\pm$ 0 }&  \textbf{0  $\pm$ 0 }   \\
spec &   \textbf{100  $\pm$ 0 }  &  \textbf{0  $\pm$ 0 }  \\
div. list & {100  $\pm$ 0 } &  100  $\pm$ 0   \\
\midrule
1 even      & {100  $\pm$ 0 }  &  100 $\pm$ 0    \\
leapyear          & {100  $\pm$ 0 }  &  100 $\pm$ 0  \\
\hline

not up  & {100  $\pm$ 0 } &  100  $\pm$ 0   \\ 
all red     & {100  $\pm$ 0 } &  100  $\pm$ 0     \\
all eq. size    & {100  $\pm$ 0 }  &  100  $\pm$ 0    \\ 
 one bl.   & \textbf{100  $\pm$ 0 } &  \textbf{55  $\pm$ 0} \\
bl. or sm. & {100  $\pm$ 0 } &   100  $\pm$ 0   \\ 
largest red    & {100  $\pm$ 0 } &   100 $\pm$ 0 \\ 

\hline
ind. set  & {100  $\pm$ 0 } &  100  $\pm$ 0   \\
star        & \textbf{100  $\pm$ 0 }  & \textbf{24  $\pm$ 0 }   \\
unconn.      & \textbf{100  $\pm$ 0 } & \textbf{92 $\pm$ 0  }   \\ 
p. subgraph  & {100  $\pm$ 0 }  &  100  $\pm$ 0   \\
rgn   & {100  $\pm$ 0 }  &  100 $\pm$ 0   \\
max weight  & {100  $\pm$ 0 } &  100  $\pm$ 0     \\
dom. set   &  \textbf{100  $\pm$ 0 } &  \textbf{0  $\pm$ 0 } \\
max. ind. set & \textbf{100  $\pm$ 0 }  &  \textbf{88 $\pm$ 0}  \\
\hline

subset          & {100  $\pm$ 0 }  &  100  $\pm$ 0  \\
distinct         & {100  $\pm$ 0 } & 100  $\pm$ 0   \\
set diff.   & \textbf{92  $\pm$ 0} &  \textbf{50 $\pm$ 0}  \\
sym. diff.  & {100  $\pm$ 0 } & 100 $\pm$ 0  \\
decom.      &  {100  $\pm$ 0 }  & 100 $\pm$ 0  \\
m-distinct  &  {100  $\pm$ 0 } & 100 $\pm$ 0 \\

\end{tabular}
\caption{
Mean predictive accuracies (10 runs). We round times to the nearest second and provide 99\% confidence intervals. Ordered by domain and optimal solution size.
}
\label{tab:PredictiveAcc2}
\end{table}

\section{Generate Phase ASP: Polar}
\label{sec:ASPencoding}
Below we provide the ASP code restricting the generation of normal logic programs to polar programs only. We will explain how each part relates to the definition provided in Section~\ref{sec:setting} within the main body of the paper. Some of the names are abbreviated for space reasons. 
\begin{align}
        pol(0..1).&\label{aln:eq1}\\
polEq(Ph,1,Pb)\mbox{:-}& h\_lit(C,Ph,\_,\_),\label{aln:eq2}\\
                       & b\_lit(C,@n\_bk\_c(Pb),\_,\_) \nonumber\\
                       & invented(Pb,\_). \nonumber\\
polEq(Ph,0,Pb)\mbox{:-}& h\_lit(C,Ph,\_,\_),\label{aln:eq3} \\
                       & b\_lit(C,Pb,\_,\_), \nonumber\\
                       & invented(Pb,\_).  \nonumber\\
polEq(P,0,P)\mbox{:-}  & h\_lit(C,P,\_,\_),\label{aln:eq4}\\
                       & not\ b\_lit(\_,P,\_,\_), \nonumber\\
                       & not\ b\_lit(\_,@n\_bk\_c(P),\_,\_). \nonumber\\
polEq(Ph,X\ \mbox{\textasciicircum}\ Y,Pb2)\mbox{:-} & h\_pred(Ph,\_),\label{aln:eq5}\\
                     & polEq(Ph,X,Pb), \nonumber\\
                     & polEq(Pb,Y,Pb2). \nonumber
\end{align}
\begin{align}
negcls(C)\mbox{:-}& h\_pred(Ph,\_),\label{aln:eq6}\\
               & polEq(Ph,1,Pb),\nonumber\\
               & h\_lit(C,Pb,\_,\_).\nonumber\\
poscls(C)\mbox{:-}& h\_pred(Ph,\_),\label{aln:eq7}\\
               & polEq(Ph,0,Pb),\nonumber\\
               & h\_lit(C,Pb,\_,\_).\nonumber\\
negcnt(X)\mbox{:-}& \#count\{C: negcls(C)\}=X.\label{aln:eq8}\\
poscnt(X)\mbox{:-}& \#count\{C: poscls(C)\}=X.\label{aln:eq9}\\
\mbox{:-}& clause(C),\label{aln:eq10}\\ 
         & poscls(C),\nonumber\\ 
         & negcls(C).\nonumber 
\end{align}

In the above ASP code, $h\_lit/4$ and $b\_lit/4$ refer to head and body literals within a given normal program, respectively. The predicates $h\_pred/2$ and $invented/2$ denote user-defined head predicate symbols and invented symbols, respectively. And $@n\_bk\_c$ is a function used to generate a negated version of the input predicate symbol. 

Equation~\ref{aln:eq1} defines that there are only two polarities. The top set (Definition~\ref{def:topset}) is defined using Equations~\ref{aln:eq3}~\&~\ref{aln:eq4}. The constraints of Definition~\ref{def:polarity} are implemented using Equations~\ref{aln:eq2},~\ref{aln:eq3},~\&~\ref{aln:eq5}. Equations~\ref{aln:eq6}~\&~\ref{aln:eq7} assign polarities to clauses of the program. Equations~\ref{aln:eq8}~\&~\ref{aln:eq9} are used for building constraints. And finally, Equation~\ref{aln:eq10} restricts the output model to polar programs, Definition~\ref{def:polarProg}.

\subsection{Encoding Correctness (Sketch)}
If a predicate $p$ occurs in the body of a clause with head symbol $q$ we say $q$ \textit{calls} $p$; what we refer to as the \textit{call relation}. Equations~\ref{aln:eq2},~\ref{aln:eq3},~\&~\ref{aln:eq4} Define the call relation for non-background knowledge predicates.  Equation~\ref{aln:eq5} defines the transitive closure of the call relation. As defined, these four equations also capture the notion of top symbol, however, the only top symbols which we need to consider are those defined as \textit{head predicates} (\textit{h\_pred/2}). All non-background predicates must have a path within the transitive closure of the call relation to head predicates and head predicates never occur negated. Invented predicates are not defined as \textit{head predicates}, but rather auxiliary head predicates.  

The predicate \textit{polEq} takes an addition argument denoting whether the third argument occurred negatively (1) or positively (0) in the body of a clause whose head symbol is the first argument. We refer to the second argument as the \textit{parity}. In Equation~\ref{aln:eq5}, when computing the transitive closure we \textit{Xor} the parities. Thus, we keep track of how many negations occur on paths between a given symbol and the head predicates; this is clearly indicated in Equations~\ref{aln:eq6}~\&~\ref{aln:eq7} which assign clauses to being positive or negative based on the parity between a head predicate and the head symbol of the clause. 

If a program is non-polar than some clauses will be assigned to both the positive and negative set. Such assignment occurs when the transitive closure of the call relation computes multiple parities for the paths between the head symbol of a clause and the head predicates. For example, if negated recursion occurs, or if a predicate symbol shows up positively and negatively in the same clause. Equations~\ref{aln:eq10} checks if any clause is assigned to both the positive and negative set. 

\section{Example Polar Constraints}

Below we provide polar generalisation and specialisation constraints generated when running \name{} on the graph task G7, \textit{dominating set}. We provide the program and the generated constraint.  

\subsection{Generalisation Constraint}
\begin{itemize}
    \item \textbf{Hypothesis}
    \begin{verbatim}
dominating(A,B):- member(C,B),\+ inv1(A,B,C).
inv2(A,B,C):- edge(A,D,C),member(D,B).
inv1(A,B,C):- \+ inv2(A,B,C),node(A,C). 
    \end{verbatim}
\item \textbf{Constraint}
\begin{verbatim}
:-
head_literal(R2,inv1,3,(R2VA, R2VB, R2VC)), 
body_literal(R2,not_inv2,3,(R2VA, R2VB, R2VC)), 
body_literal(R2,node,2,(R2VA, R2VC)), 
R2VA == 0, R2VB == 1, R2VC == 2,
head_literal(R0,dominating,2,(R0VA, R0VB)), 
body_literal(R0,member,2,(R0VC, R0VB)), 
body_literal(R0,ournot_inv1,3,(R0VA,R0VB,R0VC)), 
R0VA == 0, R0VB == 1, R0VC == 2,
head_literal(R1,inv2,2,(R1VA, R1VB, R1VC)), 
body_literal(R0,edge,2,(R1VA, R1VD,R1VC)), 
body_literal(R1,member,3,(R1VD,R1VB)), 
R1VA == 0, R1VB == 1, R1VC == 2, R1VD == 3,
body_size(R0,2), 
body_size(R1,2),
posclause(R0),
posclause(R1), 
negclause(R2), 
negcount(1), 
R2 < R1, R0 < R1, R0 < R2.
\end{verbatim}
\end{itemize}
\subsection{Specialisation Constraint}

\begin{itemize}
 \item \textbf{Hypothesis}
    \begin{verbatim}
dominating(A,B):- member(C,B),\+ inv1(A,B,C).
inv2(A,B,C):- edge(A,C,D),member(D,B).
inv1(A,B,C):- node(A,C),\+ inv2(A,B,C). 
    \end{verbatim}
\item \textbf{Constraint}
\begin{verbatim}
:-
head_literal(R2,inv1,3,(R2VA, R2VB, R2VC)), 
body_literal(R2,not_inv2,3,(R2VA, R2VB, R2VC)), 
body_literal(R2,node,2,(R2VA, R2VC)), 
R2VA == 0, R2VB == 1, R2VC == 2,
head_literal(R0,dominating,2,(R0VA, R0VB)), 
body_literal(R0,member,2,(R0VC, R0VB)), 
body_literal(R0,ournot_inv1,3,(R0VA,R0VB,R0VC)), 
R0VA == 0, R0VB == 1, R0VC == 2,
head_literal(R1,inv2,2,(R1VA, R1VB, R1VC)), 
body_literal(R0,edge,2,(R1VA, R1VC,R1VD)), 
body_literal(R1,member,3,(R1VD,R1VB)), 
R1VA == 0, R1VB == 1, R1VC == 2, R1VD == 3,
body_size(R0,2), 
body_size(R1,2),
posclause(R0),
posclause(R1), 
negclause(R2), 
poscount(2),
R2 < R1, R0 < R1, R0 < R2.
\end{verbatim}
\end{itemize}

\end{document}